\documentclass{amsart}

\usepackage{graphicx,fullpage}

\DeclareMathOperator*{\argmax}{argmax}

\newcommand{\diag}{\operatorname{diag}}

\newtheorem{theorem}{Theorem}
\newtheorem{lemma}{Lemma}
\newtheorem{corollary}{Corollary}
\newtheorem{proposition}{Proposition}

\newtheorem{remark}{Remark}
\setlength\parindent{0pt}

\title{Manifold learning with bi-stochastic kernels}

\author[]{Nicholas F. Marshall}
\address{Program in Applied Mathematics, Yale University, New Haven, CT 06511,
USA} 
\email{nicholas.marshall@yale.edu}

\author[]{Ronald R. Coifman}
\address{Program in Applied Mathematics, Yale University, New Haven, CT 06511,
USA} 
\email{ronald.coifman@yale.edu }

\keywords{Bi-stochastic kernel, manifold learning, diffusion map}
\subjclass[2010]{42B37, 47D07 (primary) and 	65C40, 58J65 (secondary)}

\begin{document}
\begin{abstract}
In this paper we answer the following question: what is the infinitesimal
generator of the diffusion process defined by a kernel that is normalized such
that it is bi-stochastic with respect to a specified measure? More precisely,
under the assumption that data is sampled from a Riemannian manifold we
determine how the resulting infinitesimal generator depends on the potentially
nonuniform distribution of the sample points, and the specified measure for the
bi-stochastic normalization. In a special case, we demonstrate a connection to
the heat kernel. We consider both the case where only a single data set is
given, and the case where a data set and a reference set are given. The spectral
theory of the constructed operators is studied, and Nystr\"om extension
formulas for the gradients of the eigenfunctions are computed. Applications to
discrete point sets and manifold learning are discussed.
\end{abstract}
\maketitle

\section{Introduction} \label{intro}
\subsection{Introduction}

In the fields of machine learning and data mining, kernel based methods related
to diffusion processes have proven to be effective tools for data analysis,
e.g. \cite{BerryHarlim2016, Cloninger2016,
KimDsilvaKevrekidisDebenedetti2015, LedermanTalmonWuLoCoifman2015,
MishneCohen2013}. Such methods are often studied as manifold learning problems
where the given data is assumed to have been sampled from an underlying
Riemannian manifold, see \cite{BelkinNiyogi2008, CoifmanLafon2006,
MarshallHirn2017, Singer2006}.  In this paper, we adopt this manifold
perspective to study bi-stochastic kernel constructions. Let $(X,d\mu)$ be a
measure space; we say that a positive kernel $b(x,y)$ is bi-stochastic with
respect to $d\mu$ if 
$$
\int_X b(x,z) d\mu(z) = \int_X b(z,y) d\mu(z) = 1 \quad \text{for all} \quad
x,y \in X.
$$
Bi-stochastic kernels can be constructed by appropriately normalizing positive
kernels. We are specifically interested in symmetric constructions. Suppose that
$X$ is a compact topological space, $d\mu$ is a positive Borel measure, and
$k(x,y)$ is a continuous positive symmetric kernel. Then it follows from a
result of Knopp and Sinkhorn \cite{KnoppSinkhorn1968} that there exists a 
positive continuous function $d(x)$ such that
$$
\frac{k(x,y)}{d(x) d(y)} \quad \text{is bi-stochastic with respect to $d\mu$}.
$$
Moreover, the bi-stochastic normalization of $k(x,y)$ can be determined
iteratively by alternating between normalizing the kernel by a function of $x$
such that the integral over $y$ is equal to $1$, and vice versa.  This iteration
was first studied in the matrix setting by Sinkhorn \cite{Sinkhorn1964}. In this
paper we consider bi-stochastic kernel constructions in two settings. First, in
Section \ref{mainresults}, we consider the continuous setting where $X$ is a
compact smooth Riemannian manifold. Here, we prove our main results, Theorems
\ref{thm1} and \ref{thm2}, which describe a connection between bi-stochastic
kernels and diffusion operators. Second, in Section \ref{discrete}, we consider
the discrete setting where $X$ is a finite set of points.  Here, we translate
the results of Theorems \ref{thm1} and \ref{thm2} into the discrete setting, and
present numerical examples. Finally, in Sections \ref{proofthm1} and
\ref{proofthm2} we prove Theorems \ref{thm1} and \ref{thm2}, respectively.

\section{Main Results} \label{mainresults}
\subsection{Single measure} 
Suppose that $X$ is a compact smooth Riemannian manifold endowed with a measure
$d\mu(x) = q(x) dx$, where $q \in C^3(X)$ is a positive density function with
respect to the volume measure $dx$ on $X$. Assume that $X$ is isometrically
embedded in $\mathbb{R}^d$ and define $k_\varepsilon : X \times X \to
\mathbb{R}$ by
$$
k_\varepsilon(x,y) := h \left( \frac{|x-y|^2}{\varepsilon} \right),
$$
where $h$ is a smooth function of exponential decay, and where $|\cdot|$ denotes
the Euclidean norm on $\mathbb{R}^d$. Given a positive weight function $w \in
C^3(X)$, we define $d\hat{\mu}(x) := d\mu(x)/w(x)$, and assert that there exists
a function $d \in C^\infty(X)$ such that the kernel $b_\varepsilon : X \times X
\rightarrow \mathbb{R}$ defined by
$$
b_\varepsilon(x,y) = \frac{k_\varepsilon(x,y)}{d(x) d(y)}
\quad \text{is bi-stochastic with respect to} \quad d\hat{\mu},
$$
i.e., $ \int_X b_\varepsilon(x,y) d\hat{\mu}(y)  =  \int_X b_\varepsilon(x,y)
d\hat{\mu}(x)  = 1$. The existence of such a function $d$ follows from
\cite{KnoppSinkhorn1968} and from the smoothness of $h$ as described in
Section \ref{existence}. Define the operator 
$$
B_\varepsilon : L^2(X,d\hat{\mu}) \to L^2(X,d\hat{\mu}) \quad \text{by} \quad
B_\varepsilon f (x) = \int_X b_\varepsilon(x,y) f(y) d\hat{\mu}(y).
$$
Our first result characterizes the infinitesimal generator associated with
$B_\varepsilon$, and requires a class of smooth test functions.  As in
\cite{CoifmanLafon2006}, we define $E_k$ to be the span of the first $k$ Neumann
eigenfunctions of the Laplace-Beltrami operator $\Delta$ on $X$, with the
convention (by the possibility of multiplying by $-1$) that $\Delta$ is positive
semi-definite.  Let
$$
m_0 = \int_{\mathbb{R}^d} h(|t|^2) dt \quad \text{and} \quad
m_2 = \int_{\mathbb{R}^d} t_1^2 h(|t|^2) dt,
$$
where $dt$ denotes the Lebesgue measure on $\mathbb{R}^d$.

\begin{theorem} \label{thm1}
Suppose $k > 0$ is fixed. Then on $E_k$
$$
\frac{I - B_{\varepsilon}}{\varepsilon} f \rightarrow \frac{m_2}{2 m_0} \left(
\frac{\Delta \left( f \left(\frac{q}{w}\right)^{1/2} \right)}{\left( \frac{q}{w}
\right)^{1/2}}  - 
\frac{\Delta \left( \frac{q}{w} \right)^{1/2} }{\left(
\frac{q}{w}\right)^{1/2}}  f \right) \quad \text{in $L^2$ norm as} \quad
\varepsilon \rightarrow 0,
$$
where $I$ denotes the identity operator.
\end{theorem}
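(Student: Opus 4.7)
My plan is to turn bi-stochasticity into a fixed-point equation for $d$, use a Laplace-type asymptotic expansion to compute $d$ and then $B_\varepsilon f$ order by order in $\varepsilon$, and observe the cancellation of curvature terms in the final answer. Setting $\rho := q/w$, so that $d\hat\mu = \rho\,dx$, the symmetry of $b_\varepsilon$ reduces both bi-stochasticity conditions to the single scalar equation $d(x) = G_\varepsilon[\rho/d](x)$, where $G_\varepsilon[g](x) := \int_X k_\varepsilon(x,y) g(y)\,dy$. The only analytic input I need is the standard Laplace-type expansion on an embedded Riemannian manifold (as in \cite{CoifmanLafon2006}):
$$
G_\varepsilon[g] = \varepsilon^{d/2} m_0\, g + \frac{\varepsilon^{d/2+1} m_2}{2}\bigl(-\Delta g + E\, g\bigr) + O(\varepsilon^{d/2+2})
$$
for smooth $g$, where $E$ is a smooth function depending only on $X$ and its embedding (the sign on $\Delta$ reflects the paper's positive semi-definite convention).

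Scaling in the fixed-point equation forces $d(x) = \varepsilon^{d/4}\phi(x)$ with $\phi$ bounded away from $0$. Writing $\phi = \phi_0 + \varepsilon\phi_1 + O(\varepsilon^2)$ and matching powers in $\varepsilon$ gives $\phi_0 = \sqrt{m_0\,\rho}$ at order $\varepsilon^0$, and an explicit expression for $\phi_1$ in terms of $\rho$, $\Delta\sqrt\rho$, and $E\sqrt\rho$ at order $\varepsilon^1$; equivalently,
$$
d(x)^2 = m_0\, \varepsilon^{d/2}\, \rho(x)\bigl(1 + 2\varepsilon\, \phi_1/\phi_0 + O(\varepsilon^2)\bigr).
$$
Applying the Laplace expansion a second time with $g = f\rho/d$ and dividing by $d(x)$ yields
$$
B_\varepsilon f(x) = \frac{m_0\, f\rho}{d^2}(x) + \frac{\varepsilon\, m_2}{2\, d(x)}\bigl(-\Delta(f\rho/d) + E\, f\rho/d\bigr)(x) + O(\varepsilon^2).
$$
Substituting the expansion of $d^2$ into the first summand, and replacing $d$ by its leading order $\varepsilon^{d/4}\sqrt{m_0\,\rho}$ in the second summand (valid modulo $O(\varepsilon^2)$), produces $B_\varepsilon f = f - \varepsilon L f + O(\varepsilon^2)$; a direct computation then verifies that the $E$-dependent contributions from the $\phi_1/\phi_0$ term and from the $\varepsilon$-correction in $G_\varepsilon$ cancel exactly, so that $L$ matches the stated right-hand side. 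Upgrading pointwise convergence to $L^2$-convergence on the finite-dimensional space $E_k$ follows from uniform-in-$x$ error bounds, which hold by compactness of $X$ and smoothness of $h$, $q$, $w$, and the Neumann eigenfunctions in $E_k$.

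The main obstacle I expect is the second-order Sinkhorn-type expansion of $d$: the coefficient $\phi_1$ must be tracked precisely enough to verify the cancellation of the embedding-dependent term $E$, so that the final generator depends only on the intrinsic data $\Delta$, $q$, and $w$. A useful sanity check is that constants lie in the kernel of both $I - B_\varepsilon$ (by bi-stochasticity) and of the stated right-hand side (by inspection), which confirms the sign and algebraic form of the limit.
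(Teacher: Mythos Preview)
Your interior computation is essentially the paper's Proposition~1: the fixed-point equation $d=G_\varepsilon[\rho/d]$, the Laplace-type expansion to extract the leading behaviour $d\sim \varepsilon^{d/4}\sqrt{m_0\rho}$, and the substitution into the expansion of $B_\varepsilon f$ are exactly what the paper does (it packages things as an expansion of $d^2$ rather than of $\phi=\phi_0+\varepsilon\phi_1+\cdots$, but the algebra is the same, and the $E$-cancellation appears in the same way).

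The genuine gap is the last sentence. The Laplace expansion you quote from \cite{CoifmanLafon2006} is \emph{not} uniform in $x$ on a manifold with boundary: it is valid only for $x$ at distance at least $\varepsilon^\gamma$ from $\partial X$ (this is the paper's Lemma~1), and the error degenerates as $x\to\partial X$. Compactness alone does not save you. The paper handles this by a separate near-boundary expansion (Lemma~2, leading to Proposition~2): for $x$ within $\varepsilon^\gamma$ of $\partial X$ one shows $B_\varepsilon f(x)=f(x_0)+\mathcal O(\varepsilon)$, where $x_0$ is the nearest boundary point, and here the Neumann condition on $f\in E_k$ is used in an essential way---it is what makes the $\sqrt\varepsilon$ term in the boundary expansion cancel. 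One then splits $X$ into the interior region and a boundary collar of volume $\mathcal O(\varepsilon^\gamma)$, uses the Neumann condition again to get $f(x)-f(x_0)=\mathcal O(\varepsilon^{2\gamma})$, and chooses $\gamma$ close to $1/2$ so that the collar contributes $o(1)$ to the $L^2$ norm of $(I-B_\varepsilon)f/\varepsilon$. Without this boundary analysis your argument establishes the pointwise asymptotic only on the open interior, which is not enough for $L^2$ convergence on $X$.
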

\noindent For example, suppose that $h(|x|^2) = e^{-|x|^2}$ is the Gaussian kernel which
satisfies
$$
m_0 = \int_{\mathbb{R}^d}  e^{-|t|^2} dt = (2 \pi)^{d/2},
\quad \text{and} \quad
m_2 = \int_{\mathbb{R}^d} t_1^2 e^{-|t|^2} dt= 2 (2 \pi)^{d/2},
$$
and suppose that $w(x) = q_\varepsilon(x)$, where
$$
q_\varepsilon(x) := \int_X k_\varepsilon(x,y) d\mu(y).
$$
Then we have the following Corollary.

\begin{corollary} \label{cor1}
Let $k > 0$ be fixed. Suppose $h(|x|^2) = e^{-|x|^2}$ and $w(x) =
q_\varepsilon(x)$. Then on $E_k$
$$
\frac{I - B_{\varepsilon}}{\varepsilon} f   \rightarrow \Delta f
\quad \text{in $L^2$ norm as} \quad \varepsilon \rightarrow 0,
$$
where $\Delta$ denotes the positive semi-definite Laplace-Beltrami operator on
$X$.  
\end{corollary}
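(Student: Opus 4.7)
The strategy is to apply Theorem \ref{thm1} with $w = q_\varepsilon$, and check that, for the Gaussian and this particular choice of $w$, the right-hand side collapses to $\Delta f$. The subtlety is that $q_\varepsilon$ depends on $\varepsilon$ while Theorem \ref{thm1} is stated for a fixed weight $w \in C^3(X)$, so some care is needed.

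First I would rewrite the right-hand side of Theorem \ref{thm1} in a more transparent form. Setting $g = (q/w)^{1/2}$ and using the product rule $\Delta(fg) = g\,\Delta f + f\,\Delta g - 2\,\nabla f \cdot \nabla g$ for the positive semi-definite Laplacian, one obtains
$$
\frac{\Delta(fg)}{g} - \frac{\Delta g}{g}\,f \;=\; \Delta f - 2\,\nabla f \cdot \nabla \log g \;=\; \Delta f - \nabla f \cdot \nabla \log(q/w).
$$
For the Gaussian kernel, the paper's values $m_0 = (2\pi)^{d/2}$ and $m_2 = 2(2\pi)^{d/2}$ give $m_2/(2 m_0) = 1$, so the limit predicted by Theorem \ref{thm1} is $\Delta f - \nabla f \cdot \nabla \log(q/w)$.

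Next I would use the standard asymptotic expansion for integrals of Gaussian-type kernels against smooth densities on compact isometrically embedded manifolds (as in \cite{CoifmanLafon2006}): there exists $R \in C^\infty(X)$, depending on $q$, its derivatives, and the second fundamental form of $X \hookrightarrow \mathbb{R}^d$, such that
$$
q_\varepsilon(x) \;=\; \varepsilon^{d/2}\bigl(m_0\, q(x) + \varepsilon\, R(x) + O(\varepsilon^2)\bigr)
$$
uniformly in $x$ and smoothly in $x$. Consequently $q(x)/q_\varepsilon(x)$ equals the constant $(\varepsilon^{d/2} m_0)^{-1}$ times $\bigl(1 + O(\varepsilon)\bigr)$ in $C^3$, so $\nabla \log(q/q_\varepsilon) = O(\varepsilon)$ uniformly on $X$. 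Substituting back, the gradient correction in the simplified formula is $O(\varepsilon)$ and disappears in the limit, leaving $\Delta f$.

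The remaining (and hardest) step is to justify applying Theorem \ref{thm1} with the $\varepsilon$-dependent weight $w = q_\varepsilon$. A useful preliminary observation is that $B_\varepsilon$ is invariant under rescaling $w \mapsto c\,w$ by any positive constant $c$: the bi-stochastic kernel then transforms as $b_\varepsilon \mapsto c\, b_\varepsilon$ while $d\hat{\mu} \mapsto d\hat{\mu}/c$, so these factors cancel inside $B_\varepsilon f(x) = \int b_\varepsilon(x,y) f(y)\, d\hat{\mu}(y)$. Using this, I would replace $w = q_\varepsilon$ by $\tilde{w}_\varepsilon := q_\varepsilon/\varepsilon^{d/2}$, which by the expansion above is a $C^3$-bounded family converging to the fixed function $m_0\, q$. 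I would then argue that the proof of Theorem \ref{thm1}, to be given in Section \ref{proofthm1}, goes through with estimates uniform in $C^3$-bounded perturbations $w_\varepsilon = w_0 + O(\varepsilon)$, so that replacing a fixed $w$ with such a family contributes only $O(\varepsilon)$ extra error in $L^2$ norm. The main obstacle is establishing this uniformity: one must revisit how the Sinkhorn factor $d = d_\varepsilon$ depends on $w$ and verify that a smooth $O(\varepsilon)$ perturbation in $w$ propagates to an $O(\varepsilon)$ perturbation (rather than $O(1)$) in the limit formula. Once that is in place, applying the formula with limiting weight $w_0 = m_0\, q$, for which $q/w_0 = 1/m_0$ is constant, immediately yields $\Delta f$.
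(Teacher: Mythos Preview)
Your proposal is correct and follows essentially the same route as the paper: expand $q_\varepsilon$ via Lemma~\ref{lem1} to see that $q/q_\varepsilon$ equals a constant times $(1+O(\varepsilon))$, then feed this back into the proof of Theorem~\ref{thm1} rather than applying the theorem as a black box. Your product-rule rewriting of the limit as $\Delta f - \nabla f \cdot \nabla\log(q/w)$ and the scale-invariance observation for $B_\varepsilon$ are not in the paper's terse argument, but they make explicit precisely the cancellation the paper summarizes by ``substituting this expression for $q_\varepsilon$ into the final step of the proof of Theorem~\ref{thm1}.''
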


\noindent The proof of this Corollary is included after the proof of Theorem
\ref{thm1}; essentially, the proof amounts to checking that $w(x) =
q_\varepsilon(x)$ approximates $q(x)$ with sufficiently small error such that
$w(x)$ can be replaced by $q(x)$ in the result of Theorem \ref{thm1}. Corollary
\ref{cor1} suggests that the operator $B_\varepsilon$ can be viewed as an
approximation to the heat kernel; more precisely, we have the following result.

\begin{corollary} \label{cor2}
Suppose that $h(|x|^2) = e^{-|x|^2}$ and $w(x) = q_\varepsilon(x)$. Then on
$L^2(X)$
$$
B_{\varepsilon}^{t/\varepsilon} f \rightarrow e^{-t \Delta} f
\quad \text{in $L^2$ norm as} \quad \varepsilon \rightarrow 0,
$$
where $e^{-\Delta t}$ denotes the Neumann heat kernel on $X$.
\end{corollary}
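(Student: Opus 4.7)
The statement is a semigroup approximation: writing $t = \varepsilon\cdot(t/\varepsilon)$ and recalling Corollary \ref{cor1}, the exponent $t/\varepsilon$ exactly compensates the $\varepsilon$ in $(I-B_\varepsilon)/\varepsilon \to \Delta$, so heuristically $B_\varepsilon^{t/\varepsilon} = (I-\varepsilon\Delta+o(\varepsilon))^{t/\varepsilon} \to e^{-t\Delta}$. The plan is to make this rigorous via the Chernoff product formula: if $\{F(\varepsilon)\}_{\varepsilon>0}$ is a family of contractions on a Hilbert space with $F(0)=I$, if $(I-F(\varepsilon))f/\varepsilon \to Af$ strongly on a core of a closed operator $A$, and if $-A$ generates a contraction semigroup, then $F(\varepsilon)^{\lfloor t/\varepsilon\rfloor} \to e^{-tA}$ strongly. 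I would take $F(\varepsilon)=B_\varepsilon$ and $A=\Delta$.

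The three hypotheses are checked as follows. For contractivity, positivity and bi-stochasticity of $b_\varepsilon$ make $B_\varepsilon$ a symmetric Markov operator preserving $d\hat\mu$; it is therefore bounded by $1$ on both $L^\infty(d\hat\mu)$ and $L^1(d\hat\mu)$, hence on $L^2(d\hat\mu)$ by Riesz--Thorin. The standard asymptotic $q_\varepsilon(x) = m_0\varepsilon^{d/2}q(x)(1+O(\varepsilon))$ uniformly in $x$ (already used to pass from Theorem \ref{thm1} to Corollary \ref{cor1}) shows $d\hat\mu = d\mu/q_\varepsilon$ is a uniform positive scalar multiple of $dx$ to leading order, so $L^2(X,d\hat\mu)$ and $L^2(X)$ coincide as normed spaces up to a vanishing perturbation, and the contraction bound transfers. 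Generator convergence on a core is provided directly by Corollary \ref{cor1}: for every $k$ and every $f\in E_k$, $(I-B_\varepsilon)f/\varepsilon \to \Delta f$ in $L^2$, and $E_\infty := \bigcup_k E_k$ is dense in $L^2(X)$ and invariant under $\Delta$, hence a core for the self-adjoint Neumann Laplacian. Semigroup generation is standard: $\Delta$ is nonnegative self-adjoint, so $-\Delta$ generates the contraction semigroup $e^{-t\Delta}$ by the spectral theorem. Chernoff then yields $B_\varepsilon^{\lfloor t/\varepsilon\rfloor}f \to e^{-t\Delta}f$ for every $f\in L^2(X)$.

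To upgrade from integer iterates to the fractional power $B_\varepsilon^{t/\varepsilon}$ defined by the functional calculus of the nonnegative self-adjoint operator $B_\varepsilon$, I would write $B_\varepsilon^{t/\varepsilon} = B_\varepsilon^{\lfloor t/\varepsilon\rfloor}\cdot B_\varepsilon^{\{t/\varepsilon\}}$ and show $B_\varepsilon^{\{t/\varepsilon\}}\to I$ strongly. On each $E_k$ this follows from the spectral inequality $1-\lambda^\alpha \leq \alpha(1-\lambda)$ for $\lambda\in[0,1]$, $\alpha\in[0,1]$, combined with the eigenvalue perturbation implied by Corollary \ref{cor1}; density plus the uniform contraction bound extends it to all of $L^2(X)$. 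The main obstacle is really in the verification of Chernoff's hypotheses rather than in Chernoff itself: one must confirm that the $\varepsilon$-dependent identification $L^2(X,d\hat\mu)\simeq L^2(X)$ is sufficiently benign that the strong-$L^2$ convergence on $E_\infty$ is preserved, and that the core property of $E_\infty$ survives under this identification. Both points reduce to the uniform $C^0$ asymptotic for $q_\varepsilon$ and to standard Neumann spectral theory on $X$.
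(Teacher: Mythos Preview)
Your approach is correct and runs parallel to the paper's, but it is packaged differently and in fact more carefully. The paper argues directly: it writes $f=f_\varepsilon+g_\varepsilon$ with $f_\varepsilon\in E_k$ and $\|g_\varepsilon\|_{L^2}\le\varepsilon$, uses boundedness of $B_\varepsilon^{t/\varepsilon}$ and $e^{-t\Delta}$ to dispose of $g_\varepsilon$, and then on $E_k$ invokes Corollary~\ref{cor1} together with the identity $(1-\varepsilon\Delta)^{t/\varepsilon}\to e^{-t\Delta}$ to conclude. You instead invoke Chernoff's product formula, verifying contractivity of $B_\varepsilon$ from bi-stochasticity, generator convergence on the core $E_\infty=\bigcup_k E_k$ from Corollary~\ref{cor1}, and self-adjointness of $\Delta$. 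The underlying ingredients are identical---Corollary~\ref{cor1} on each $E_k$ plus density of Neumann eigenfunctions---but Chernoff makes rigorous exactly the step the paper treats informally, namely the passage from $B_\varepsilon f = f-\varepsilon\Delta f + o(\varepsilon)$ to $B_\varepsilon^{t/\varepsilon}f\to e^{-t\Delta}f$, where the error must survive $t/\varepsilon$ iterations. Your explicit treatment of the fractional exponent $\{t/\varepsilon\}$ and of the $\varepsilon$-dependence of $d\hat\mu$ also addresses points the paper passes over. The cost of your route is having to check Chernoff's hypotheses carefully (in particular the contraction bound on a fixed Hilbert space, which as you note requires the asymptotic $q_\varepsilon=m_0\varepsilon^{d/2}q(1+O(\varepsilon))$ and yields at worst $\|B_\varepsilon\|_{L^2(dx)}\le 1+O(\varepsilon)$, still sufficient for Chernoff); the benefit is a fully rigorous argument.
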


\noindent The proof of Corollary \ref{cor2} is included following the proof of Theorem
\ref{thm1}.

\subsection{Connection to diffusion maps}

The reader might notice the similarity between the bi-stochastic operator
$B_{\varepsilon}$ and the averaging operator $A_\varepsilon$ of Lafon
\cite{Lafon2004} which also approximates the Laplace-Beltrami operator.
Specifically, the operator  $A_{\varepsilon} : L^2(X,dx) \rightarrow L^2(X,dx)$
is defined by 
$$
A_{\varepsilon} f(x) = \frac{1}{v(x)^2} \int_X
\frac{k_\varepsilon(x,y)}{q_\varepsilon(x) q_\varepsilon(y)} f(y)
d\mu(y),
\quad
\text{where}
\quad
v(x)^2 = \int_X
\frac{k_\varepsilon(x,y)}{q_\varepsilon(x) q_\varepsilon(y)}
d\mu(y).
$$
Unlike the operator $B_\varepsilon$, the operator $A_\varepsilon$ is only
row stochastic, not bi-stochastic. Thus, we argue that $B_\varepsilon$ provides
a more natural analog to the heat kernel than the averaging operator
$A_\varepsilon$. Of course, when these constructions are performed on data,
numerically these operators are very similar (they have the same limiting
behavior) so in the case of a single data set, the difference between
$B_\varepsilon$ and $A_\varepsilon$ is mostly a matter of perspective. However,
this bi-stochastic perspective is valuable in that it translates to more general
settings as we demonstrate in the following section, where a reference measure
is additionally given. We remark that bi-stochastic kernels have been previously
considered in the context of reference graphs by Coifman and Hirn
\cite{CoifmanHirn2013}; however, in this paper a specialized construction is
presented which avoids the use of Sinkhorn iteration, and infinitesimal
generators are not investigated.

%

\subsection{Reference measure} 
We are motivated by the reference set embedding of Haddad, Kushnir, and Coifman
\cite{HaddadKushnirCoifman2014,KushnirHaddadCoifman2014} . Suppose that in
addition to the measure space $(X,d\mu)$, a reference measure space 
$$
(X,d\nu) \quad \text{is given such that} \quad d\nu(r) =
p(r) dr,
$$ 
where $p \in C^3(X)$ is a positive density function with respect to the
volume measure $dr$ on $X$. Define $d\hat{\nu}(r) = d\nu(r)/v(r)$ where $v
\in C^3(X)$ is a given positive weight function. We assert that there exists a
positive function $d \in C^\infty(X)$ such that the kernel $c_\varepsilon : X
\times X \rightarrow \mathbb{R}$ defined by
$$
c_\varepsilon(x,y) = \frac{\int_X k_\varepsilon(x,r) k_\varepsilon(y,r)
d\hat{\nu}(r)}{d(x) d(y)} \quad \text{is bi-stochastic with respect to}
\quad
d\hat{\mu},
$$
i.e., $\int_X c_\varepsilon(x,y) d\hat{\mu}(y) = \int_X c_\varepsilon(x,y)
d\hat{\mu}(x) =1$. As before, the existence of such a function $d$ follows from
\cite{KnoppSinkhorn1968} and the smoothness of $c_\varepsilon$ as
discussed in Section \ref{existence}. Define the operator 
$$
C_{\varepsilon} : L^2(X,d\hat{\mu}) \rightarrow L^2(X,d\hat{\mu}) 
\quad \text{by} \quad
C_{\varepsilon} f (x) = \int_X c_{\varepsilon}(x,y) f(y) d\hat{\mu}(y).
$$ 
The following result concerns the infinitesimal generator associated with
$C_\varepsilon$. We consider the same class of smooth test function $E_k$ as
before; that is, $E_k$ is the span of the first $k$ Neumann eigenfunctions of
the Laplace-Beltrami operator $\Delta$ on $X$.

\begin{theorem} \label{thm2}
Suppose $k > 0$ is fixed. Then on $E_k$
$$
\frac{I - C_{\varepsilon}}{\varepsilon} f \to \frac{m_2}{2 m_0} \left( \frac{
\Delta \left( f \cdot \left( \frac{q \cdot v}{w \cdot p} \right)^{1/2}
\right)}{\left( \frac{q \cdot v}{w \cdot p} \right)^{1/2}} - \frac{ \Delta
\left( \frac{q \cdot v}{w \cdot p} \right)^{1/2} }{\left( \frac{q \cdot v}{w
\cdot p} \right)^{1/2}} \cdot f + \frac{ \Delta \left( f \cdot \left( \frac{q
\cdot p}{w \cdot v} \right)^{1/2} \right)}{\left( \frac{q \cdot p}{w \cdot v}
\right)^{1/2}} - \frac{ \Delta \left( \frac{q \cdot p}{w \cdot v} \right)^{1/2}
}{\left( \frac{q \cdot p}{w \cdot v} \right)^{1/2}} \cdot f \right)
$$
in $L^2$ norm as $\varepsilon \rightarrow \infty$.
\end{theorem}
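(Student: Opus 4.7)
The plan is to follow the same overall structure as the proof of Theorem \ref{thm1}, with the additional work concentrated in (i) obtaining an asymptotic expansion of the composite kernel $\tilde k_\varepsilon(x,y) := \int_X k_\varepsilon(x,r) k_\varepsilon(y,r) d\hat{\nu}(r)$, and (ii) feeding that expansion through the Sinkhorn normalization. The appearance of two structurally parallel pairs of terms in the stated limit, one built from $qv/(wp)$ and one from $qp/(wv)$, strongly suggests that the composite construction behaves as a Theorem \ref{thm1}-style bi-stochastic kernel with two superposed effective densities, namely the ratio $\hat{q}/\hat{p} = (q/w)/(p/v)$ and the product $\hat{q}\cdot \hat{p} = (q/w)(p/v)$, where $\hat q$ and $\hat p$ denote the densities of $d\hat\mu$ and $d\hat\nu$ with respect to volume.

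The first step is to derive the asymptotic behavior of $\tilde k_\varepsilon$. The integrand $k_\varepsilon(x,r) k_\varepsilon(y,r)$ is concentrated in an $O(\sqrt\varepsilon)$-neighborhood of the midpoint $m := (x+y)/2$, so I would change variables $r = m + \sqrt{\varepsilon}\, s$, Taylor expand $\hat p$, the embedding, and the metric around $m$, and perform the resulting Gaussian-moment integral (this is explicit for $h(t)=e^{-t}$, and for general $h$ of exponential decay it follows by Laplace's method). This yields an expansion of the form
$$
\tilde k_\varepsilon(x,y) = \varepsilon^{n/2}\, \hat p(m)\, \tilde h\!\left(\tfrac{|x-y|^2}{\varepsilon}\right) \bigl(1 + \varepsilon\, R_\varepsilon(x,y) + \cdots\bigr),
$$
where $n = \dim X$, the profile $\tilde h$ inherits exponential decay from $h$, and $R_\varepsilon$ collects curvature and density-gradient corrections. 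Rewriting $\hat p(m) = \bigl(\hat p(x) \hat p(y)\bigr)^{1/2}\bigl(1 + O(\varepsilon)\bigr)$ then splits the density symmetrically between the two arguments, so that after dividing by $d(x) d(y)$ the kernel $c_\varepsilon$ is in a form directly amenable to the machinery developed for Theorem \ref{thm1}.

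From here the argument mirrors the proof of Theorem \ref{thm1}: the Sinkhorn normalizer $d \in C^\infty(X)$ exists by the smoothness discussion in Section \ref{existence}, and its asymptotic expansion is pinned down by the bi-stochasticity constraint $\int_X c_\varepsilon(x,y)\, d\hat\mu(y) = 1$. Substituting everything into $(I - C_\varepsilon)/\varepsilon$ and applying, twice, the standard diffusion-map expansion of $\int_X k_\varepsilon(x,y) F(y)\, d\mu(y)$ up to order $\varepsilon^{n/2+1}$, then restricting to $E_k$, should produce the claimed limit, with the $qv/(wp)$ pair reflecting the way in which $d\hat\nu$ enters opposite to $d\hat\mu$ (the ratio $\hat q/\hat p$), and the $qp/(wv)$ pair reflecting the way the two measures compound through the double smoothing (the product $\hat q\cdot\hat p$). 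The main obstacle will be the second-order bookkeeping: because two successive smoothings are composed before Sinkhorn normalization, one has to separately track the contributions from (a) the deviation of $\hat p((x+y)/2)$ from $\bigl(\hat p(x) \hat p(y)\bigr)^{1/2}$, (b) the manifold curvature entering the midpoint expansion, and (c) the normalizer $d(x)$, and then verify that these combine cleanly into the two symmetric pairs indexed by $qv/(wp)$ and $qp/(wv)$. I expect this combinatorics, rather than any new analytical ingredient beyond those used in Theorem \ref{thm1}, to be the real difficulty.
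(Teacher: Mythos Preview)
Your plan is viable but takes a genuinely different route from the paper's. The paper never forms or expands the composite kernel $\tilde k_\varepsilon(x,y)$. Instead it writes
\[
C_\varepsilon f(x)=\int_X \frac{k_\varepsilon(x,r)}{d(x)}\,\frac{p(r)}{v(r)}\left(\int_X k_\varepsilon(y,r)\,\frac{f(y)q(y)}{d(y)w(y)}\,dy\right)dr
\]
via Fubini, truncates the outer integral to $\{|r-x|<\varepsilon^\gamma\}$, and then applies Lemma~\ref{lem1} (respectively Lemma~\ref{lem2} near $\partial X$) \emph{twice in succession}: once to the inner $y$-integral at the point $r$, and once to the resulting $r$-integral at the point $x$. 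The bi-stochastic constraint is treated the same way to pin down $d(x)^2$. This two-pass use of the existing lemma produces the four Laplacian terms directly in terms of $m_0,m_2$ of the original profile $h$, with essentially no new analysis beyond the single-measure case.

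By contrast, your midpoint/Laplace expansion introduces a new effective profile $\tilde h$ whose moments you would then have to relate back to $m_0,m_2$, and the $O(\varepsilon)$ correction in $\hat p(m)=\bigl(\hat p(x)\hat p(y)\bigr)^{1/2}(1+O(\varepsilon))$ is not negligible: it carries $\nabla^2\hat p$ and $|\nabla\hat p|^2$ contributions that enter the generator at leading order and must be tracked explicitly, together with the curvature terms from the manifold midpoint expansion. None of this is wrong, and you have correctly flagged the bookkeeping as the crux, but it is strictly more work than the paper's Fubini-plus-two-applications-of-Lemma~\ref{lem1} device. The payoff of your route would be an explicit description of $c_\varepsilon(x,y)$ as a kernel, which the paper never needs; the payoff of the paper's route is that no new asymptotic lemma is required at all.
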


\noindent For example, suppose that  $h(|x|^2) = e^{-|x|^2}$ is the Gaussian
kernel, and define two density estimates
$$
q_\varepsilon(x) = \int_X k_\varepsilon(x,y) q(y) dy \quad 
\text{and} \quad p_\varepsilon(r) = \int_X k_\varepsilon(r,s) p(s) ds.
$$
Our first Corollary concerns the case where the measure $q(x)$ is of interest,
while the measure $p(r)$ is not. For example, in the discrete setting discussed
in Section \ref{discrete} the density $p(r)$ may be artificial, while the
density $q(x)$ may describe clusters in the data.

\begin{corollary} \label{cor3}
Suppose $k > 0$ is fixed. If $h(|x|^2)=e^{-|x|^2}$, $w(x) = 1/q_\varepsilon(x)$
and $v(r) = p_\varepsilon(r)$, then on $E_k$
$$
\frac{I - C_\varepsilon}{\varepsilon} f \rightarrow \frac{\Delta \left( f
q \right)}{q} - \frac{\Delta q}{q} f \quad
\text{in $L^2$ norm as } \varepsilon \to 0.
$$
\end{corollary}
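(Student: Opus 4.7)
The plan is to apply Theorem \ref{thm2} directly with the prescribed choices $w(x) = 1/q_\varepsilon(x)$ and $v(r) = p_\varepsilon(r)$, and then to simplify the resulting four-term limit by exploiting the fact that $q_\varepsilon$ and $p_\varepsilon$ are kernel density estimates which approximate $q$ and $p$ up to a common, $\varepsilon$-dependent scalar factor. The spirit is the same as that of Corollary \ref{cor1}: an $\varepsilon$-dependent weight is replaced, to leading order, by its $\varepsilon$-independent surrogate inside the formula of Theorem \ref{thm2}, and one checks that the lower-order corrections do not affect the limit on $E_k$.

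First I would substitute $w = 1/q_\varepsilon$ and $v = p_\varepsilon$ into the two ratios that appear inside the Laplacians in Theorem \ref{thm2}, obtaining
$$
\frac{q v}{w p} = \frac{q\, p_\varepsilon\, q_\varepsilon}{p}
\quad \text{and} \quad
\frac{q p}{w v} = \frac{q\, p\, q_\varepsilon}{p_\varepsilon}.
$$
Next I would invoke the standard asymptotic expansion for Gaussian convolutions on a compact Riemannian manifold, which gives
$$
q_\varepsilon(x) = \varepsilon^{d/2}\bigl[\, m_0\, q(x) + O(\varepsilon)\, \bigr]
$$
and analogously for $p_\varepsilon$, with $C^2$-bounded remainders uniformly in $x$. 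Plugging these expansions in, both ratios take the form $c_\varepsilon \cdot q(x)^2 \cdot (1 + O(\varepsilon))$ for a positive scalar $c_\varepsilon$. Since the expressions $\Delta(fg)/g$ and $\Delta(g^{1/2})/g^{1/2}$ are invariant under multiplying $g$ by a nonzero constant, each of the four terms in the Theorem \ref{thm2} limit collapses, as $\varepsilon \to 0$, to a scalar multiple of either $\Delta(fq)/q$ or $(\Delta q/q)\,f$. After combining the constants together with the Gaussian-specific value of $m_2/(2 m_0)$, one recovers the claimed right-hand side $\Delta(fq)/q - (\Delta q/q)\, f$.

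The main obstacle is to verify that the $O(\varepsilon)$ corrections in the expansions of $q_\varepsilon$ and $p_\varepsilon$ do not contribute to the $L^2(X, d\hat{\mu})$ limit on $E_k$. Because the corrections are smooth with bounded $C^2$-norm (a consequence of the $C^3$ regularity of $q$ and $p$ together with standard manifold convolution estimates), and because $E_k$ is spanned by smooth Neumann eigenfunctions of $\Delta$, this reduces to a perturbation bookkeeping argument essentially identical to the one already used for Corollary \ref{cor1}, only now carried out simultaneously for the two density estimates $q_\varepsilon$ and $p_\varepsilon$ that enter through $w$ and $v$. I expect no conceptual new ingredient beyond that parallel, so once the two substitutions are computed as above, the proof is concluded by invoking the error control from the proof of Theorem \ref{thm2}.
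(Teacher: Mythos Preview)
Your approach is correct and coincides with the paper's: the paper simply states that Corollary \ref{cor3} is ``immediate from Theorem \ref{thm2} and the proof of Corollary \ref{cor1},'' which is exactly your plan of substituting $w = 1/q_\varepsilon$, $v = p_\varepsilon$ into the four-term limit of Theorem \ref{thm2} and then replacing the density estimates by their leading-order approximations via the same error analysis as in Corollary \ref{cor1}. Your explicit computation that both ratios $qv/(wp)$ and $qp/(wv)$ reduce to a constant multiple of $q^2$ (so that their square roots are proportional to $q$) is the correct unpacking of what the paper leaves implicit.
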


\noindent The proof of this Corollary is immediate from Theorem \ref{thm2} and
the proof of Corollary \ref{cor1}. If neither density function is of interest,
then their effects can be removed from the associated infinitesimal generator as
in Corollary \ref{cor1}.

\begin{corollary} \label{cor4}
Let $k > 0$ be fixed. If $h(|x|^2)=e^{-|x|^2}$, $w(x) = q_\varepsilon(x)$,
and $v(r) = p_\varepsilon(r)$, then on $E_k$
$$
 \frac{I - C_{\varepsilon}}{\varepsilon} f \to
\Delta f \quad \text{in $L^2$ norm as } \varepsilon \rightarrow 0,
$$
where $\Delta$ denotes the positive semi-definite Laplace-Beltrami operator on
$X$.  
\end{corollary}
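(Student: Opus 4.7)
My plan is to derive Corollary \ref{cor4} directly from Theorem \ref{thm2} by showing that, under the choices $w=q_\varepsilon$ and $v=p_\varepsilon$, both ratios appearing in the theorem collapse to $x$-independent constants at leading order in $\varepsilon$. This makes the ``potential''-type terms $\Delta((\cdot)^{1/2})/(\cdot)^{1/2}\cdot f$ vanish in the limit and reduces the four-term bracket to a pure multiple of $\Delta f$. The strategy is the same one used to pass from Theorem \ref{thm1} to Corollary \ref{cor1}, applied now to each of the two pairs of terms in Theorem \ref{thm2}.

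First I would establish the standard pointwise expansion of the density estimates. Writing the convolution integral defining $q_\varepsilon(x)$ in a normal coordinate chart at $x$ and substituting $y = x + \sqrt{\varepsilon}\,t$ gives
\[
q_\varepsilon(x) = \varepsilon^{d/2}\, m_0\, q(x)\bigl(1 + O(\varepsilon)\bigr),\qquad
p_\varepsilon(x) = \varepsilon^{d/2}\, m_0\, p(x)\bigl(1 + O(\varepsilon)\bigr),
\]
with smooth, $C^2$-bounded remainders. Substituting these into the ratios of Theorem \ref{thm2} yields
\[
\frac{qv}{wp} = \frac{q\, p_\varepsilon}{q_\varepsilon\, p} = 1 + O_{C^2}(\varepsilon),\qquad
\frac{qp}{wv} = \frac{qp}{q_\varepsilon\, p_\varepsilon} = \frac{1}{m_0^2\,\varepsilon^{d}}\bigl(1 + O_{C^2}(\varepsilon)\bigr),
\]
so each of the square roots differs from an $x$-independent constant by an $O_{C^2}(\varepsilon)$ smooth perturbation.

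The remaining step is algebraic. For any smooth $g = g_0\bigl(1 + \varepsilon \rho\bigr)$ with $g_0$ an $x$-independent constant and $\rho$ uniformly $C^2$-bounded, expansion using the product rule for $\Delta$ gives
\[
\frac{\Delta(fg)}{g} - \frac{\Delta g}{g}\,f = \Delta f + O_{L^2}(\varepsilon),
\]
uniformly for $f$ in the finite-dimensional space $E_k$. Applying this identity to each of the two pairs of terms in the formula of Theorem \ref{thm2} collapses the bracket to a multiple of $\Delta f$ plus $o(1)$ in $L^2$, and the Gaussian value of $m_2/(2m_0)$ combines with this to produce exactly $\Delta f$.

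The main obstacle is analytic rather than algebraic: one needs the $O(\varepsilon)$ corrections to $q_\varepsilon/(\varepsilon^{d/2} m_0 q)$ and $p_\varepsilon/(\varepsilon^{d/2} m_0 p)$ to be uniformly $C^2$-controlled in $x$ and $\varepsilon$, so that the Laplacian manipulation above is legitimate on $E_k$. This is precisely the type of estimate already exploited in the proof of Corollary \ref{cor1}, so Corollary \ref{cor4} should follow from two parallel invocations of that control.
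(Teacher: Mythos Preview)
Your approach is correct and matches the paper's, which simply declares the corollary ``immediate from Theorem~\ref{thm2} and the error analysis performed in the proof of Corollary~\ref{cor1}.'' You have spelled out exactly that argument in detail, including the key technical point that the $O(\varepsilon)$ remainders in the density estimates must be $C^2$-controlled so that the Laplacian manipulations on the two pairs of terms are legitimate.
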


\noindent The proof of this Corollary is immediate from Theorem \ref{thm2} and
the error analysis performed in the proof of Corollary \ref{cor1}.

\begin{corollary} \label{cor5}
Suppose $h(|x|^2) = e^{-|x|^2}$, $w(x) = q_\varepsilon(x)$, and $v(r) =
p_\varepsilon(r)$. Then on $L^2(X)$
$$
C_\varepsilon^{t/\varepsilon} f \to e^{-t \Delta} f \quad
\text{in $L^2$ norm as } \varepsilon \to 0,
$$
where $e^{-t \Delta}$ is the Neumann heat kernel on $X$.
\end{corollary}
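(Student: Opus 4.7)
The plan is to mirror the argument already used for Corollary~\ref{cor2}, replacing the generator convergence from Corollary~\ref{cor1} with the one provided by Corollary~\ref{cor4}. Since $c_\varepsilon(x,y)$ is symmetric in $(x,y)$ and bi-stochastic with respect to $d\hat{\mu}$, the operator $C_\varepsilon$ is self-adjoint on $L^2(X,d\hat{\mu})$ with operator norm at most $1$. With $w = q_\varepsilon$ we have $d\hat{\mu} = (q/q_\varepsilon)\,dx$, and the standard kernel density estimate $q_\varepsilon = q + O(\varepsilon)$ uniformly on $X$ shows that $L^2(X,d\hat{\mu})$ and $L^2(X,dx)$ have equivalent norms with ratio tending to $1$. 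It is therefore enough to prove the convergence in $L^2(X,d\hat{\mu})$.

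Let $\{\phi_j\}_{j \geq 0}$ and $\{\mu_j\}_{j \geq 0}$ be the Neumann eigenfunctions and eigenvalues of $\Delta$. For each fixed $j$, Corollary~\ref{cor4} provides
\[
C_\varepsilon \phi_j = (1 - \varepsilon \mu_j)\,\phi_j + \varepsilon\, r_{j,\varepsilon}, \qquad \|r_{j,\varepsilon}\|_{L^2(X,d\hat{\mu})} \xrightarrow{\varepsilon \to 0} 0.
\]
Writing $\alpha_j = 1-\varepsilon\mu_j$ and iterating gives the identity $C_\varepsilon^n \phi_j - \alpha_j^n \phi_j = \varepsilon \sum_{\ell=0}^{n-1} \alpha_j^{n-1-\ell}\, C_\varepsilon^{\ell} r_{j,\varepsilon}$. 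Since $|\alpha_j| \leq 1$ for $\varepsilon$ small and $\|C_\varepsilon\| \leq 1$, the triangle inequality yields $\|C_\varepsilon^n \phi_j - \alpha_j^n \phi_j\|_{L^2} \leq \varepsilon n\, \|r_{j,\varepsilon}\|_{L^2}$. Setting $n = \lfloor t/\varepsilon \rfloor$ bounds this by $t\,\|r_{j,\varepsilon}\|_{L^2} \to 0$, and because $(1 - \varepsilon \mu_j)^{t/\varepsilon} \to e^{-t\mu_j}$ for each fixed $j$, we conclude $C_\varepsilon^{t/\varepsilon} \phi_j \to e^{-t\mu_j}\,\phi_j$ in $L^2(X,d\hat{\mu})$; by linearity this extends to every $f \in E_N$ for each fixed $N$.

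For a general $f \in L^2(X)$, decompose $f = f_N + f_N^\perp$ with $f_N \in E_N$ and $f_N^\perp$ in the orthogonal complement (with respect to $dx$). The contractivity of $C_\varepsilon^{t/\varepsilon}$ on $L^2(X, d\hat{\mu})$ and of $e^{-t\Delta}$ on $L^2(X,dx)$, together with the uniform equivalence of the two norms established in the first paragraph, controls the tail by $(1+o(1))\,\|f_N^\perp\|_{L^2(X,dx)}$, which can be made arbitrarily small by choosing $N$ large. Combining the tail bound with the finite-dimensional convergence of the previous paragraph, and then converting back to $L^2(X,dx)$, completes the proof. The main technical obstacle is the iteration: the per-step error is only $o(\varepsilon)$ but one iterates $t/\varepsilon$ times, so a priori the errors could amplify; the self-adjointness of the bi-stochastic kernel together with the contractivity $\|C_\varepsilon\| \leq 1$ is exactly what prevents that amplification.
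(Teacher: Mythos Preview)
Your proposal is correct and follows essentially the same route as the paper, which simply states that the result is ``immediate from Corollary~\ref{cor4} and the error analysis in the proof of Corollary~\ref{cor2}.'' In fact, your write-up is more careful than the paper's sketch of Corollary~\ref{cor2}: you make explicit the telescoping identity $C_\varepsilon^n\phi_j-\alpha_j^n\phi_j=\varepsilon\sum_{\ell=0}^{n-1}\alpha_j^{n-1-\ell}C_\varepsilon^{\ell}r_{j,\varepsilon}$, the use of the contraction bound $\|C_\varepsilon\|_{L^2(X,d\hat\mu)}\le 1$ (which here also follows from $c_\varepsilon$ being a positive semi-definite bi-stochastic kernel), and the equivalence of the $L^2(X,dx)$ and $L^2(X,d\hat\mu)$ norms via $q_\varepsilon=q+O(\varepsilon)$---points the paper leaves implicit.
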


\noindent The proof of this Corollary is immediate from Corollary \ref{cor4} and the error
analysis in the proof of Corollary \ref{cor2}.

\subsection{Eigenfunctions and their gradients}
The idea of using eigenfunctions of the heat kernel as coordinates for
Riemannian manifolds originates with Berard, Besson, and Gallot
\cite{BerardBessonGallot1994}, and originates in the context of data analysis
with Coifman and Lafon \cite{CoifmanLafon2006}. Subsequently, Jones, Maggioni,
and Schul \cite{JonesMaggioniSchul2008} proved that the eigenfunctions of the
heat kernel provide locally bi-Lipschitz coordinates, and moreover, that these
eigenfunctions and their gradients satisfy estimates from above and below.
Informally speaking, the gradient of an eigenfunction describes how the given
eigenfunction views a neighborhood of its domain; in fact, in
\cite{JonesMaggioniSchul2008} the magnitude of the gradient of eigenfunctions is
used as a selection criteria to build local coordinates. Moreover, the gradient
of eigenfunctions is generally a useful tool for data analysis; for example,
Wolf, Averbuch, and Neittaanm\"aki \cite{WolfAverbuchNeittaanmaki2014}, and Wolf
\cite{Wolf2009} used the gradients of eigenfunctions for characterization and
detection of anomalies. In the following, we justify the existence of the
spectral decomposition of $B_\varepsilon$ and $C_\varepsilon$. Furthermore,
under the assumption that both operators are based on the Gaussian kernel we
compute Nystr\"om extension type formulas for the gradients of the
eigenfunctions of $B_\varepsilon$ and $C_\varepsilon$.  Recall, that the
bi-stochastic operator $B_{\varepsilon} : L^2(X,d\hat{\mu}) \rightarrow
L^2(X,d\hat{\mu})$ is defined 
$$
(B_{\varepsilon} f)(x) = \int_X b_{\varepsilon}(x,y) f(y) d\hat{\mu}(y).
$$
Since the kernel $b_\varepsilon$ is bounded, and $X$ is compact, it
follows that the operator $B_\varepsilon$ is a Hilbert-Schmidt operator, and
hence a compact operator.  Moreover, since $b_\varepsilon$ is also symmetric it
follows that the operator $B_\varepsilon$ is self-adjoint and we may apply the
Spectral Theorem to conclude that
$$
(B_\varepsilon f)(x) = \sum_{k=0}^\infty \lambda_k \langle f, \varphi_k
\rangle_{L^2(X,d\hat{\mu})} \varphi_k(x),
$$
where $1 = \lambda_0 > \lambda_1 \ge \lambda_2 \ge \cdots$ are the eigenvalues of
$B_\varepsilon$, and $\varphi_0,\varphi_1,\varphi_2,\ldots,$ are the
corresponding orthonormal eigenfunctions. In the following, we show that the
gradients of the eigenfunctions of $B_\varepsilon$ have a convenient formulation
when the construction is based on the Gaussian kernel.

\begin{proposition} \label{grad1}
Suppose $h(|x|^2) = e^{-|x|^2}$. Then
$$
\nabla \varphi_k(x) = \frac{1}{\lambda_k} \int_X \frac{y -
\bar{x}}{\varepsilon} b(x,y) \varphi_k(y) d\hat{\mu}(y)
\quad
\text{where}
\quad
\bar{x} := \int_X y b(x,y) d\hat{\mu}(y).
$$
\end{proposition}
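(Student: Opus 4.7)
The plan is to differentiate the eigenvalue equation
\[
\lambda_k \varphi_k(x) = \int_X b_\varepsilon(x,y) \varphi_k(y) \, d\hat\mu(y)
\]
in $x$ and to use the bi-stochasticity of $b_\varepsilon$ to express $\nabla\log d(x)$ in terms of the weighted mean $\bar{x}$. Since $b_\varepsilon(x,y) = e^{-|x-y|^2/\varepsilon}/(d(x)d(y))$, logarithmic differentiation in the first argument yields
\[
\nabla_x b_\varepsilon(x,y) = \left(\frac{2(y-x)}{\varepsilon} - \frac{\nabla d(x)}{d(x)}\right) b_\varepsilon(x,y),
\]
with differentiation under the integral justified by the exponential decay of the Gaussian and the compactness of $X$.

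To eliminate the $\nabla d$ term, I would differentiate the row-stochasticity identity $\int_X b_\varepsilon(x,y)\, d\hat\mu(y) \equiv 1$ in $x$. The resulting equality $\int_X \nabla_x b_\varepsilon(x,y)\, d\hat\mu(y) = 0$, combined with the display above, forces
\[
\frac{\nabla d(x)}{d(x)} = \frac{2(\bar{x} - x)}{\varepsilon},
\]
which is precisely the content of bi-stochasticity for a Gaussian kernel: the normalizer $d$ is determined, up to a multiplicative constant, by the condition that its logarithmic gradient agrees with the weighted displacement $\bar x - x$. Substituting this into $\lambda_k \nabla\varphi_k(x) = \int_X \nabla_x b_\varepsilon(x,y)\, \varphi_k(y)\, d\hat\mu(y)$ and using the eigenvalue equation to rewrite the $\nabla\log d(x)$ term as $(\bar x - x)$ times $\lambda_k \varphi_k(x)$ converts $y - x$ in the integrand into $y - \bar x$, producing the stated formula (with the constant out front matching up once the normalization of the Gaussian is fixed).

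Two technical remarks. First, $\varphi_k$ is defined on $X \subset \mathbb{R}^d$, so its intrinsic gradient is a tangent vector at $x$; the formula should therefore be interpreted in the tangent space $T_x X$, with the normal component of the ambient-valued integral on the right being forced to vanish. Second, the argument is essentially algebraic once one observes that bi-stochasticity encodes $\nabla\log d$ via $\bar x$; no spectral or asymptotic input is needed beyond the compactness and self-adjointness used earlier to produce $\varphi_k$ itself. Consequently I do not foresee a genuine obstacle: the only step of substance is recognizing the identity for $\nabla\log d(x)$, after which the manipulation closes in one line.
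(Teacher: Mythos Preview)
Your approach is essentially identical to the paper's: differentiate the eigenfunction identity, expand $\nabla_x b_\varepsilon$ via the product/logarithmic rule, and extract $\nabla\log d$ from bi-stochasticity (the paper differentiates $d(x)=\int k_\varepsilon(x,y)/d(y)\,d\hat\mu(y)$ directly, you differentiate $\int b_\varepsilon(x,y)\,d\hat\mu(y)=1$; these are the same computation). Your factor of $2$ from $\nabla_x|x-y|^2=2(x-y)$ is in fact correct---the paper silently drops it in both $\nabla_x k_\varepsilon$ and $\nabla\log d$, so the stated formula is off by an overall constant that your parenthetical remark already anticipates.
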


\noindent In the absence of a manifold assumption the formula for $\nabla \varphi_k(x)$ in
Proposition \ref{grad1} can be taken as a definition; for example, in Section
\ref{discrete}, we discuss how Proposition \ref{grad1} can be used to define the
gradient of an eigenfunction in a setting where $X$ is a discrete set without
any given differential structure.

\subsection{Reference measure eigenfunctions and their gradients}
Similar results hold for the operator $C_\varepsilon :
L^2(X,d\hat{\mu}) \rightarrow L^2(X,d\hat{\mu})$ defined by
$$
C_\varepsilon f (x) = \int_X c_\varepsilon(x,y) f(y) d\hat{\mu}(y).
$$
Since the kernel $c_\varepsilon$ is bounded and $X$ is compact, clearly
$C_\varepsilon$ is a Hilbert-Schmidt operator, which implies that
$C_\varepsilon$ is a compact operator. Moreover, the symmetry of $c_\varepsilon$
implies that $C_\varepsilon$ is self-adjoint and thus by the Spectral Theorem
$$
(C_\varepsilon f)(x) = \sum_{j=0}^\infty \lambda_j \langle f, \varphi_j
\rangle_{L^2(X,d\hat{\mu})} \varphi_j(x),
$$
where $1 = \lambda_0 > \lambda_1 \ge \lambda_2 \ge \cdots$ are the eigenvalues,
and $\varphi_0,\varphi_1,\varphi_2,\ldots$ are the corresponding orthonormal
eigenfunctions on $L^2(X,d\hat{\mu})$. In this case, the Nystr\"om extension
type formula for the gradient of an eigenfunction is slightly more involved,
but similar in spirit to the case of a single measure.

\begin{proposition} \label{grad2}
Suppose $h(|x|^2) = e^{-|x|^2}$. Then
$$
\nabla_x \varphi_j(x) = \frac{1}{\lambda_k} \int_X 
\frac{r - \bar{r}_x}{\varepsilon}  (F_\varepsilon \varphi_j
)(x,r) d\hat{\nu}(r).
$$
where
$$
(F_\varepsilon f) (x,r) = \int_X \frac{k_\varepsilon(x,r)
k_\varepsilon(y,r)}{d(x) d(y)} f(y)
d\hat{\mu}(y) \quad \text{and} \quad \bar{r}_x := \int_X r
 (F_\varepsilon 1) (x,r)d\hat{\nu}(r). 
$$
\end{proposition}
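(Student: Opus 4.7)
The strategy parallels the proof of Proposition \ref{grad1}: differentiate the eigenfunction equation in $x$, then use the bi-stochastic constraint to eliminate the term involving $\nabla d(x)/d(x)$. First, I would apply Fubini to the eigenfunction equation $\lambda_j \varphi_j(x) = \int_X c_\varepsilon(x,y) \varphi_j(y)\, d\hat{\mu}(y)$ to rewrite the right-hand side as $\int_X (F_\varepsilon \varphi_j)(x,r)\, d\hat{\nu}(r)$, which is legitimate because the integrand is a product of positive smooth kernels on the compact set $X \times X$. The key structural observation is that, in the expression
$$
(F_\varepsilon \varphi_j)(x,r) = \frac{k_\varepsilon(x,r)}{d(x)} \int_X \frac{k_\varepsilon(y,r)}{d(y)} \varphi_j(y)\, d\hat{\mu}(y),
$$
the dependence on $x$ is confined to the prefactor $k_\varepsilon(x,r)/d(x)$, since $d$ itself is a function only of the slot in which it appears.

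Specializing to $h(|x|^2) = e^{-|x|^2}$, a direct computation yields
$$
\nabla_x \frac{k_\varepsilon(x,r)}{d(x)} = \left( \frac{2(r-x)}{\varepsilon} - \frac{\nabla d(x)}{d(x)} \right) \frac{k_\varepsilon(x,r)}{d(x)},
$$
so that $\nabla_x (F_\varepsilon \varphi_j)(x,r) = \bigl( 2(r-x)/\varepsilon - \nabla d(x)/d(x) \bigr) (F_\varepsilon \varphi_j)(x,r)$. Differentiating under the $r$-integral, which is justified by the smoothness of $k_\varepsilon$ and the regularity $d \in C^\infty(X)$ established in Section \ref{existence}, gives
$$
\lambda_j \nabla \varphi_j(x) = \int_X \left( \frac{2(r-x)}{\varepsilon} - \frac{\nabla d(x)}{d(x)} \right) (F_\varepsilon \varphi_j)(x,r)\, d\hat{\nu}(r).
$$

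To eliminate $\nabla d(x)/d(x)$, I would invoke the bi-stochastic identity $1 = \int_X c_\varepsilon(x,y)\, d\hat{\mu}(y) = \int_X (F_\varepsilon 1)(x,r)\, d\hat{\nu}(r)$ and differentiate with respect to $x$. The same computation with $\varphi_j$ replaced by $1$ gives
$$
0 = \int_X \left( \frac{2(r-x)}{\varepsilon} - \frac{\nabla d(x)}{d(x)} \right) (F_\varepsilon 1)(x,r)\, d\hat{\nu}(r),
$$
and since $(F_\varepsilon 1)(x,r)\, d\hat{\nu}(r)$ is a probability measure in $r$, this rearranges to $\nabla d(x)/d(x) = 2(\bar{r}_x - x)/\varepsilon$. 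Substituting back into the previous display cancels the $x$-dependent shift and leaves $r - \bar{r}_x$ in the integrand, producing the claimed formula (the overall factor of $2$ is absorbed by the same convention used in Proposition \ref{grad1}).

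The main technical point is interpreting $\nabla_x \varphi_j(x)$ intrinsically on the manifold $X \subset \mathbb{R}^d$. I would handle this as in Proposition \ref{grad1}: the eigenfunction equation $\lambda_j \varphi_j(x) = \int_X c_\varepsilon(x,y) \varphi_j(y)\, d\hat{\mu}(y)$ makes sense for all $x$ in a neighborhood of $X$ in $\mathbb{R}^d$ and thereby furnishes a smooth ambient extension of $\varphi_j$; the intrinsic gradient is then the tangential projection of the ambient gradient. Because this projection is linear and commutes with the $r$-integral, the formula is unchanged under it, so the displayed identity is valid as stated.
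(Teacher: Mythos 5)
Your proof is correct and follows essentially the same approach as the paper: Nystr\"om extension of the eigenfunction equation, the product rule, and elimination of $\nabla d(x)/d(x)$ via the bi-stochastic constraint; your Fubini/factorization step (isolating the $x$-dependence in the prefactor $k_\varepsilon(x,r)/d(x)$ of $F_\varepsilon$) is a cleaner presentation of the same cancellation, not a genuinely different argument. You are also right to flag the factor of $2$: the paper computes $\nabla_x k_\varepsilon(x,y) = \frac{y-x}{\varepsilon} k_\varepsilon(x,y)$, which drops the $2$ from the chain rule applied to $e^{-|x-y|^2/\varepsilon}$, so the displayed formulas in both Proposition \ref{grad1} and Proposition \ref{grad2} should in fact carry an overall prefactor of $2$ (the $2$ cancels between the two terms only in the sense of combining $r-x$ with $\bar r_x - x$ into $r - \bar r_x$, not out of the expression entirely).
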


\noindent While this formula for the gradient of an eigenfunction is less elegant than the
result from Proposition \ref{grad1}, it nevertheless provides a method of
defining $\nabla \varphi_j(x)$ in an arbitrary setting.

\section{The discrete setting} \label{discrete}
\subsection{Single measure}
Suppose that a data set of $n$ points $X = \{x_i\}_{i=1}^n \subset \mathbb{R}^d$
is given. The data set $X$ induces a data measure
$$
d\mu(x) = \frac{1}{n} \sum_{i=1}^n \delta_{x_i}(x),
$$
where $\delta_{x_i}$ is a Dirac distribution centered at $x_i$. While the data
measure space $(X,d\mu)$ does not satisfy the assumptions specified in Section
\ref{intro}, the construction of the operator $B_\varepsilon$ can be preformed
without issue; indeed, the construction only depends on the ability to compute
Euclidean distances between points and the ability to
integrate. Moreover, if $X$ consists of samples from a probability distribution
from a Riemannian manifold, then integrating with respect to the data measure
$d\mu$ can be viewed as Monte Carlo integration, which approximates integrals on
the manifold with error on the order of $1/\sqrt{n}$. Alternatively, the data
measure space $(X,d\mu)$ can be viewed as an intrinsic abstract measure space
and be considered independent from any manifold assumption. In either case,
the results regarding infinitesimal generators in Section \ref{intro} can guide
our choice of normalization. In the following we reiterate the construction of
$B_\varepsilon$ using matrix notation and describe computational considerations
which arise in this setting.  For notation, we use bold capital letters, e.g.,
$\mathbf{A}$, to denote matrices, lower case bold letters, e.g., $\mathbf{x}$,
to denote column vectors, and $\diag(\mathbf{x})$ to denote the diagonal matrix
with the vector $\mathbf{x}$ along the diagonal.

\subsection*{Step 1: Defining the kernel and weight function}
Let $\mathbf{K}_\varepsilon$ denote the $n \times n$ matrix with entries
$$
\mathbf{K}_{\varepsilon}(i,j) = h \left(\frac{|x_i-x_j|^2}{\varepsilon}
\right) \quad \text{for} \quad i,j = 1,\ldots,n,
$$
where $h$ is a smooth function of exponential decay, e.g., $h(|x|^2) = e^{-|x|^2}$. Let $\mathbf{w} \in
\mathbb{R}^n$ denote a weight function and define $ \mathbf{W} = \diag
\left( \mathbf{w} \right)$. For example, if an explicit $\mathbf{w}$ is not
given, then for a fixed parameter $-1 \le \beta \le 1$ we could define
$$
\mathbf{w}(i) = \left(\sum_{k=1}^n h \left( \frac{|x_i - x_k|}{\varepsilon}
\right) \right)^\beta,
\quad \text{for} \quad i=1,\ldots,n.
$$
\subsection*{Step 2: Sinkhorn iteration}
Given the weight matrix $\mathbf{W}$, we seek a diagonal matrix
$\mathbf{D}$ such that
$$
\mathbf{D}^{-1} \mathbf{K}_\varepsilon \mathbf{D}^{-1} \mathbf{W}^{-1}
\mathbf{1} = \mathbf{1}.
$$
The diagonal matrix $\mathbf{D}$ will be determined via Sinkhorn iteration
\cite{Sinkhorn1964}; that is to say, by alternating between normalizing the rows
and columns of the given matrix. In the case of a symmetric matrix, Sinkhorn
iteration can be phrased as the following iterative procedure: first, we set
$\mathbf{D}_0 = \mathbf{I}$, where $\mathbf{I}$ denotes the $n \times n$
identity matrix, and then define $\mathbf{D}_k$ inductively by
$$
\mathbf{D}_{k+1} = \diag \left(\mathbf{K} \mathbf{D}_k^{-1}
\mathbf{W}^{-1} \mathbf{1} \right).
$$
We claim that if we consider the limit
$$ 
\mathbf{D} := \lim_{k \rightarrow \infty} \mathbf{D_{k+1}}^{1/2}
\mathbf{D}_k^{1/2},
$$
then the matrix $\mathbf{D} \mathbf{K}_\varepsilon \mathbf{D}$ will be
bi-stochastic with respect to $\mathbf{W}^{-1}$ as desired. Indeed, the reason
for considering the limit of the geometric mean $\mathbf{D}_k^{1/2}
\mathbf{D}_{k+1}^{1/2}$ is that \cite{Sinkhorn1964} only guarantees that $$
\mathbf{D}_{2k} \rightarrow \alpha \mathbf{D}
\quad
\text{and}
\quad
\mathbf{D}_{2k+1} \rightarrow \frac{1}{\alpha} \mathbf{D}
$$
for some constant $\alpha > 0$. 

\subsection*{Step 2.5: Heuristic tricks for Sinkhorn iteration} We note that
the following trick seems to drastically improve the rate of convergence for
some Gaussian kernel matrices. Let $\mathbf{D}_0 = \mathbf{I}$,
$\mathbf{D}_0^\prime =
\mathbf{I}$, and define
$$
\mathbf{D}_{k+1} = \diag \left( \mathbf{K}_\varepsilon \mathbf{D}_k^{-1}
(\mathbf{D}_k^\prime)^{-1/2}
\mathbf{W}^{-1} \mathbf{1} \right)
\quad \text{and} \quad
\mathbf{D}_{k+1}^\prime = \diag \left( \mathbf{D}^{-1}_k \mathbf{K}_\varepsilon
\mathbf{D}^{-1}_k \mathbf{W}^{-1} \mathbf{1} \right).
$$
This iteration corresponds to first ``dividing'' by the row sums on both sides,
and then ``dividing'' by the square root of the row sums of the resulting matrix
on both sides of the resulting matrix; this iteration is motivated by a desire
to combat the imbalance between left and right iterates.

\subsection*{Step 3: Diagonalization}
In the discrete setting, the integral operator $B_\varepsilon$ can be expressed
$$
B_\varepsilon \mathbf{f} = \mathbf{D}^{-1} \mathbf{K}_\varepsilon
\mathbf{D}^{-1} \mathbf{W}^{-1} \mathbf{f},
$$
where $\mathbf{f}$ is a column vector.  Suppose that we compute the symmetric
eigendecomposition
$$
\mathbf{W}^{-1/2} \mathbf{D}^{-1} \mathbf{K}_\varepsilon \mathbf{D}^{-1}
\mathbf{W}^{-1/2} = \mathbf{U} \mathbf{S} \mathbf{U}^\top,
$$
where $\mathbf{U}$ is an orthogonal matrix of eigenvectors, and $\mathbf{S}$ is
a diagonal matrix of eigenvalues. Define
$$
\boldsymbol{\Phi} = \mathbf{W}^{1/2} \mathbf{U}.
$$
Suppose that $\boldsymbol{\varphi}_k$ denotes the $k$-th column of the matrix
$\boldsymbol{\Phi}$, and $\lambda_k$ denotes the $k$-th diagonal element of the
diagonal matrix $\mathbf{S}$. Then
$$
B_\varphi \boldsymbol{\varphi}_k = \lambda_k \boldsymbol{\varphi}_k.
$$
That is to say, $\boldsymbol{\varphi}_k$ is an eigenvector of $B_\varepsilon$ of
eigenvalue $\lambda_k$.
\subsection*{Step 4: Gradient estimation}
Suppose $h(|x|^2) = e^{-|x|^2}$. Let $\mathbf{X}$ denote the $n \times d$ data
matrix corresponding to $n$ points in $\mathbb{R}^d$. Motivated by Proposition
\ref{grad1} we define
$$
\boldsymbol{\nabla} \boldsymbol{\varphi}_k :=
\frac{ \mathbf{D}^{-1} \mathbf{K}_\varepsilon \mathbf{D}^{-1} \mathbf{W}^{-1}
\diag(\boldsymbol{\varphi}_k) \mathbf{X} - \lambda_k
\diag(\boldsymbol{\varphi}_k) \bar{\mathbf{X}}}{ \varepsilon \lambda_k},
$$
where 
$$
\Bar{\mathbf{X}} := \mathbf{D}^{-1} \mathbf{K}_\varepsilon \mathbf{D}^{-1}
\mathbf{W}^{-1} \mathbf{X}.  
$$
We note that the entire described construction can be completely implemented
with matrix operations and one call to a symmetric eigendecomposition routine.

\subsection{Numerical example: single measure eigenfunction gradient}
We illustrate Proposition \ref{grad1} with a numerical example. We sample $n =
1000$ points $\{x_i\}_{i=1}^n$ from a $3/2 \times 1$ rectangle with each
coordinate of each point sampled uniformly and independently from their
respective ranges.  We construct the bi-stochastic kernel $B_\varepsilon$ using
the discrete construction described in Section \ref{discrete} with the Gaussian
kernel $h(|x|^2) = e^{-|x|^2}$ and $w(x) = q_\varepsilon(x)$ (the kernel density
estimate). We compute the gradients $\boldsymbol{\nabla}
\boldsymbol{\varphi}_1$, $\boldsymbol{\nabla} \boldsymbol{\varphi}_2$,
$\boldsymbol{\nabla} \boldsymbol{\varphi}_3$, and $\boldsymbol{\nabla}
\boldsymbol{\varphi}_4$ using Proposition \ref{grad1} (Step 4 of the discrete
construction), and we plot the resulting vector fields in
Figure \ref{fig1}.  If $(x_1,x_2) \in \mathbb{R}^2$, then the first four
(nontrivial) Neumann Laplacian eigenfunctions on the $3/2 \times 1$ rectangle
are $\cos(\pi 2 x_1/ 3)$, $\cos(\pi x_2)$, $\cos(2 \pi 2 x_1/3)$, and $\cos(\pi
2 x_1/3) \cos( \pi
x_2)$; hence, the vector fields of the gradients of these functions plotted in
Figure \ref{fig1} appear as expected up to errors resulting from the discrete
approximation.

\begin{figure}[h!]
\centering
\begin{tabular}{cc}
\includegraphics[width=0.4\textwidth]{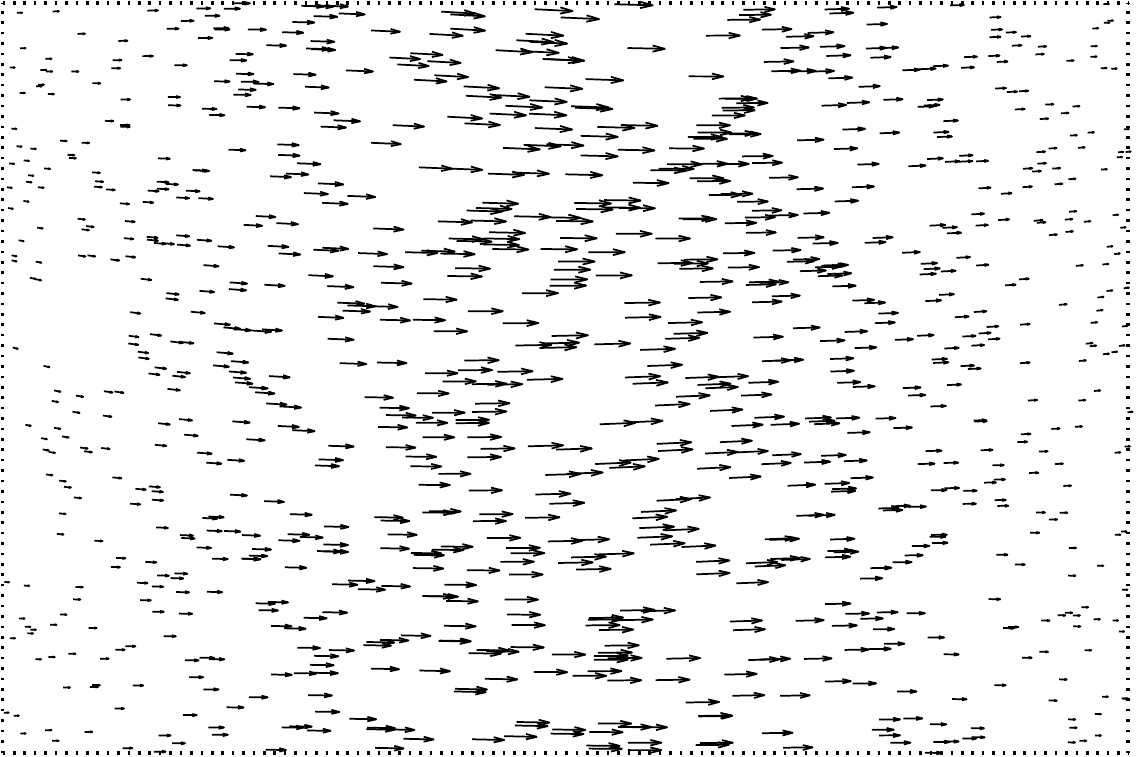} & 
\includegraphics[width=0.4\textwidth]{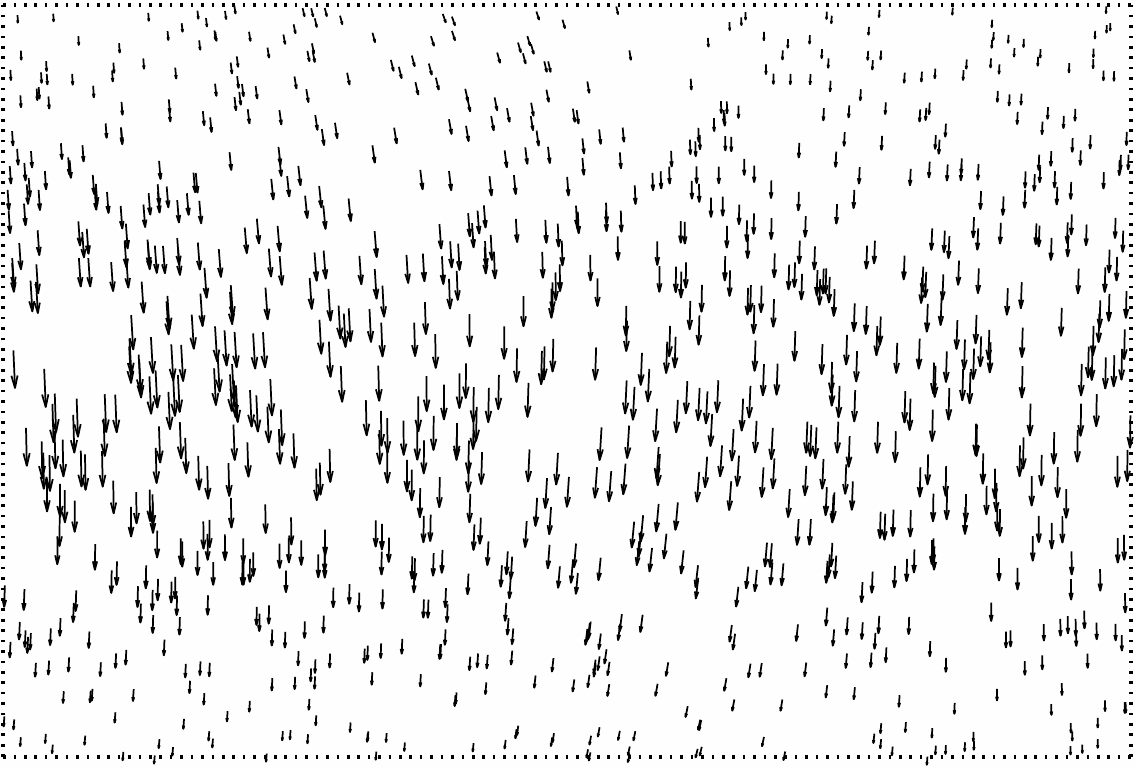} \\
\includegraphics[width=0.4\textwidth]{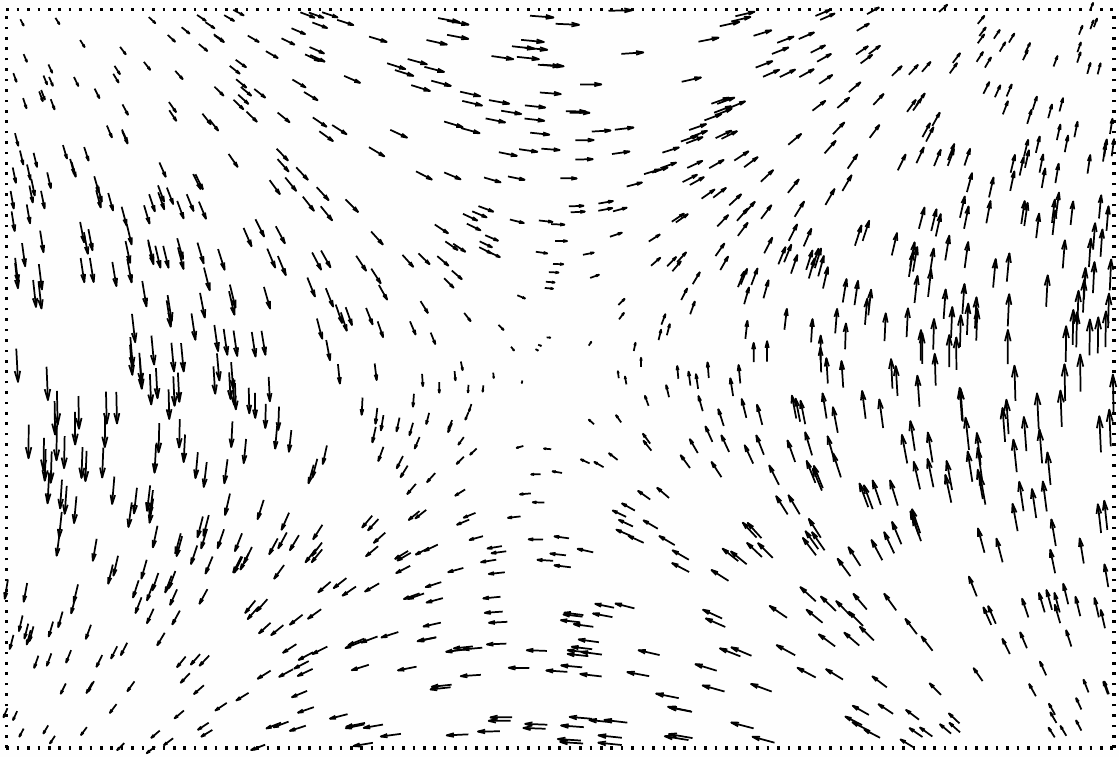} & 
\includegraphics[width=0.4\textwidth]{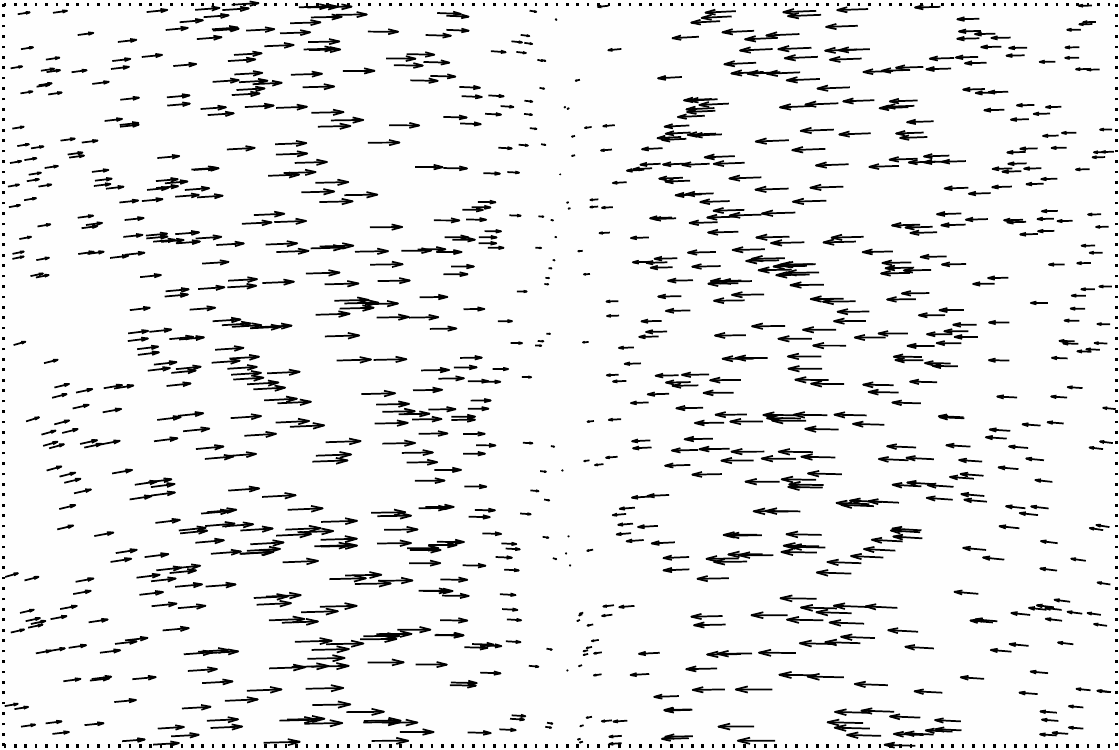} \\
\end{tabular}
\caption{The discrete vector fields $\boldsymbol{\nabla} \boldsymbol{\varphi}_1$
(top left), $\boldsymbol{\nabla} \boldsymbol{\varphi}_2$ (top right),
$\boldsymbol{\nabla} \boldsymbol{\varphi}_3$ (bottom left), and
$\boldsymbol{\nabla} \boldsymbol{\varphi}_4$ (bottom right) for our numerical
example.} \label{fig1}
\end{figure}
\subsection{Reference Measure} \label{discreteref}
Next, we describe the discrete formulation of the operator
$C_\varepsilon$.  Suppose that finite sets $\{x_i\}_{i=1}^n \subset \mathbb{R}^d$ and $\{r_j\}_{j=1}^m \subset
\mathbb{R}^d$ are given. These sets induce measures
$$
d\mu(x) = \frac{1}{n} \sum_{i=1}^n \delta_{x_i}(x) \quad \text{and} \quad 
d\nu(r) = \frac{1}{m} \sum_{j=1}^m \delta_{r_j}(r).
$$
In the following, we reiterate the construction of the operator
$C_\varepsilon$ in matrix notation. Moreover, we describe computational
considerations under the assumption that the reference set $\{r_j\}_{j=1}^m$ has
significantly less elements than the data set $\{x_i\}_{i=1}^n$. 

\subsection*{Step 1: Defining the kernel and weight functions}
We begin by defining the $n \times m$ kernel matrix
$$
\mathbf{K}_\varepsilon(i,j) = h\left( \frac{|x_i-r_j|}{\varepsilon} \right),
\quad \text{for} \quad i=1,\ldots,n \quad \text{and} \quad j=1,\ldots,m,
$$
where $h$ is a smooth function of exponential decay, and where $|\cdot|$ denotes
the Euclidean norm in $\mathbb{R}^d$. Suppose that $\mathbf{w}
\in \mathbb{R}^n$ and $\mathbf{v} \in \mathbb{R}^m$ are weight vectors and set
$\mathbf{W} = \diag(\mathbf{w})$ and $\mathbf{V} = \diag(\mathbf{v})$.  For
example, if $\mathbf{w}$, and $\mathbf{v}$ are not explicitly given, then for
$-1 \le \beta \le 1$ and $-1 \le \gamma \le 1$ we could define 
$$
\mathbf{v}(i) = \left(\sum_{k=1}^n h \left(\frac{|x_i - x_k|}{\varepsilon}
\right) \right)^\beta
\quad \text{and} \quad
\mathbf{v}(j) = \left(\sum_{k=1}^m h \left(\frac{|r_j - r_k|}{\varepsilon}
\right) \right)^\gamma,
$$
for $i=1,\ldots,n$ and $j=1,\ldots,m$, respectively.

\subsection*{Step 2: Sinkhorn Iteration} Next, Sinkhorn iteration is used to compute a diagonal
matrix $\mathbf{D}$ such that
$$
\mathbf{D}^{-1} \mathbf{K}_\varepsilon \mathbf{V}^{-1}
\mathbf{K}_\varepsilon^\top
\mathbf{D}^{-1} \mathbf{W}^{-1} \mathbf{1} = \mathbf{1}
$$
where $\mathbf{1}$ denotes a vector of ones. The same Sinkhorn Iteration
described in the previous section can be used. We remark that in this case it
may be more efficient to applying each matrix to the vector successively in the
product rather than computing the matrix product first.

\subsection*{Step 3: Eigendecomposition} We want to compute the
eigenvectors of the operator
$$
C_\varepsilon \mathbf{f} = \mathbf{D}^{-1} \mathbf{K}_\varepsilon
\mathbf{V}^{-1} \mathbf{K}_\varepsilon^\top \mathbf{D}^{-1} \mathbf{W}^{-1}
\mathbf{f}.
$$
Suppose that we compute the singular value decomposition
$$
\mathbf{W}^{-1/2} \mathbf{D}^{-1} \mathbf{K}_\varepsilon
\mathbf{V}^{-1/2} 
=
\mathbf{U} \mathbf{S} \mathbf{V}^\top,
$$
where $\mathbf{U}$ is an orthogonal matrix of left singular vectors,
$\mathbf{S}$ is a diagonal matrix of singular values, and $\mathbf{V}$ is a
orthogonal matrix of right singular vectors. Define
$$
\boldsymbol{\Phi} := \mathbf{W}^{1/2} \mathbf{U}.
$$
By the definition of the singular value decomposition, it immediately follows
that if $\boldsymbol{\varphi}_k$ is the $k$-th column of $\boldsymbol{\Phi}$, and
$\lambda_k$ is the $k$-th diagonal element of $\mathbf{S}^2$, then
$$
C_\varepsilon \boldsymbol{\varphi}_k = \lambda_k \boldsymbol{\varphi}_k,
$$
that is, $\boldsymbol{\varphi}_k$ is an eigenvector of $C_\varepsilon$ of
eigenvalue $\lambda_k$.

\subsection*{Step 4: Eigenfunction Gradients}
Let $\mathbf{R}$ denote the $m \times d$ matrix encoding the reference set
$\{r_j\}_{j=1}^m \subset \mathbb{R}^d$. Motivated by Proposition \ref{grad2} we
define
$$
\boldsymbol{\nabla} \boldsymbol{\varphi}_j := \frac{
 (F_\varepsilon \boldsymbol{\varphi}_j)  \mathbf{R}
-  \lambda_j \diag \left( \boldsymbol{\varphi}_j \right) \bar{\mathbf{R}}_\mathbf{X}}{\lambda_j \varepsilon},
$$
where
$$
F_\varepsilon \mathbf{f} := \mathbf{D}^{-1} \mathbf{K}_\varepsilon
 \diag \left(\mathbf{K}^\top_\varepsilon \mathbf{D}^{-1} \mathbf{W}^{-1}
\mathbf{f} \right) \mathbf{V}^{-1}
\quad \text{and}
\quad
\bar{\mathbf{R}}_\mathbf{X} = (F_\varepsilon \mathbf{1}) \mathbf{R}.
$$
Here $\mathbf{f}$ denotes an arbitrary $n \times 1$ vector, while $\mathbf{1}$
denotes either an $n \times 1$ or an $m \times 1$ vector of ones depending on
the context.

\subsection{Numerical example: reference measure eigenfunction gradient}
First, we generate a dataset of $n = 1000$ points $\{x_i\}_{i=1}^n$ sampled
uniformly at random from the unit disc. Second, we select a reference set of $m
= 100$ points using a pivoted Gram-Schmidt procedure on the kernel matrix 
$$ 
\mathbf{K}(i,j) = e^{-|x_i - x_j|^2/\varepsilon}.  
$$ 
\begin{figure}[h!]
\centering
\begin{tabular}{cc}
\includegraphics[width=0.3\textwidth]{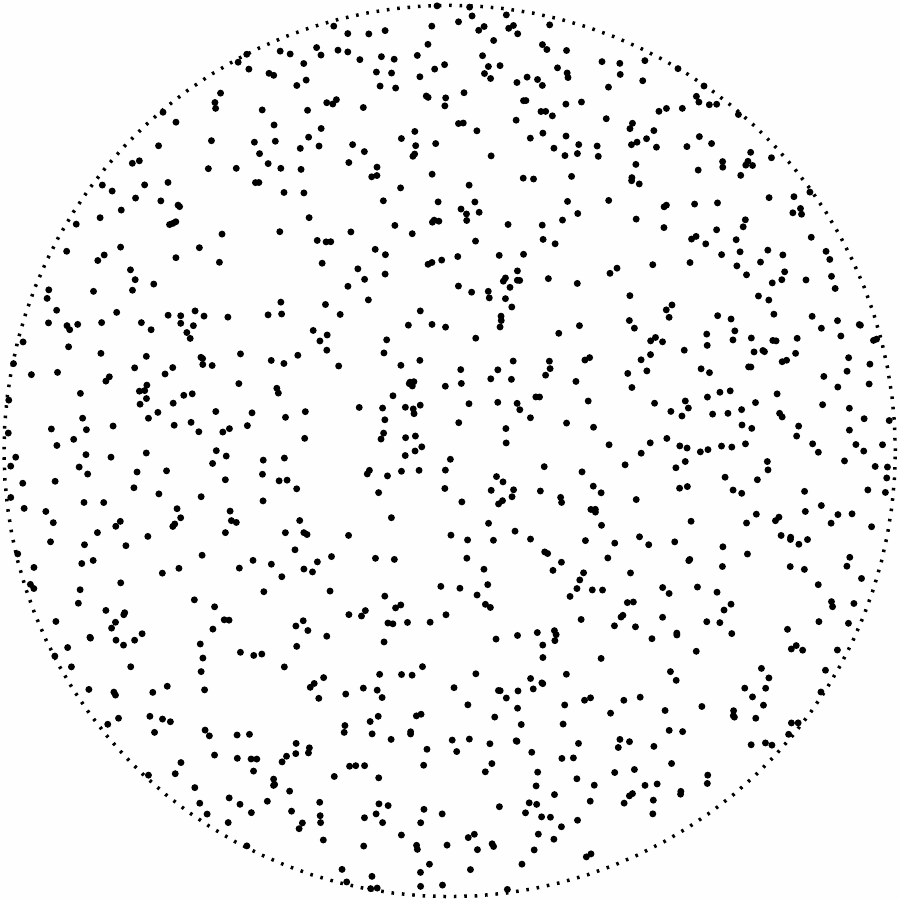} &
\includegraphics[width=0.3\textwidth]{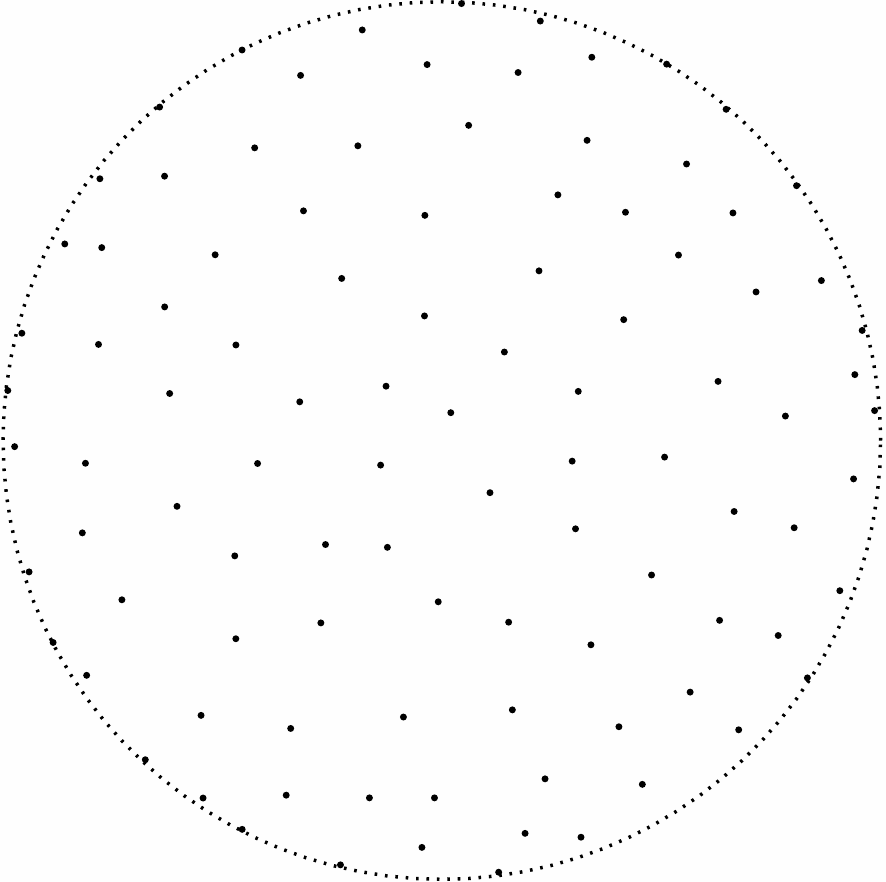}
\end{tabular}
\caption{We plot the data set $\{x_i\}_{i=1}^n$ (left) and the reference set
$\{r_j\}_{j=1}^m$ (right).} \label{fig10}
\end{figure}

\noindent Specifically, we choose the indices  $i_1,\ldots i_m$ of the reference
points from the data set $\{x_i\}_{i=1}^n$ inductively as follows. Let
$\{\mathbf{k}_i^{(1)}\}_{i=1}^n$ denote the columns of $\mathbf{K}$ and define
$i_1 = \argmax_{i} | \mathbf{k}^{(1)}_i |$ where $|\cdot|$ denote the Euclidean
norm in $\mathbb{R}^n$. Given $i_1,\ldots,i_j$ we choose $i_{j+1}$ by
$$
i_{j+1} = \argmax_i |\mathbf{k}_i^{(j+1)}| \quad \text{where} \quad
\mathbf{k}_i^{(j+1)} = \mathbf{k}_i^{(j)} - \frac{\mathbf{k}_i^{(j)} \cdot
\mathbf{k}_{i_j}}{ |\mathbf{k}_{i_j}^{(j)} |^2} \mathbf{k}_{i_j}^{(j)} \quad
\text{for} \quad i =1,\ldots,n,
$$
where $\cdot$ denotes the Euclidean inner product in $\mathbb{R}^n$.  That is to
say, at each step we choose the column of $\mathbf{K}$ with the largest $L^2$
norm, and then orthogonalize all of the columns to the selected column and
proceed iteratively.  Given the data set $\{x_i\}_{i=1}^n$ and the reference set
$\{r_j\}_{j=1}^m$ we construct the operator $C_\varepsilon$ and its
eigenfunctions as described in
Section \ref{discrete}.  
Using Proposition \ref{grad2} (whose discrete
formulation is described in Step 4 in Section \ref{discreteref}) we compute an
approximation of the gradient of the first two (nontrivial) eigenfunctions
$\boldsymbol{\nabla}\boldsymbol{\varphi}_1$, and $\boldsymbol{\nabla}
\boldsymbol{\varphi}_2$  which are plotted in Figure \ref{fig20}. The first
(nontrivial) Neumann Laplacian eigenvalue on the disc is of multiplicity two,
and the associated eigenspace is spanned by the functions 
$$
J_1( \lambda_1 r) \cos ( \theta ) 
\quad \text{and} \quad 
J_1( \lambda_1 r) \sin ( \theta ) 
\quad \text{where} \quad \lambda_1 \approx 1.84
$$
is the smallest positive root of the equation $J^\prime_1(x) = 0$; 
therefore, the discrete approximations of the gradients $\boldsymbol{\nabla}
\boldsymbol{\varphi}_1$ and $\boldsymbol{\nabla} \boldsymbol{\varphi}_2$ in Figure
\ref{fig20} appear as expected.

\begin{figure}[h!]
\centering
\begin{tabular}{cccc}
\includegraphics[width=0.3\textwidth]{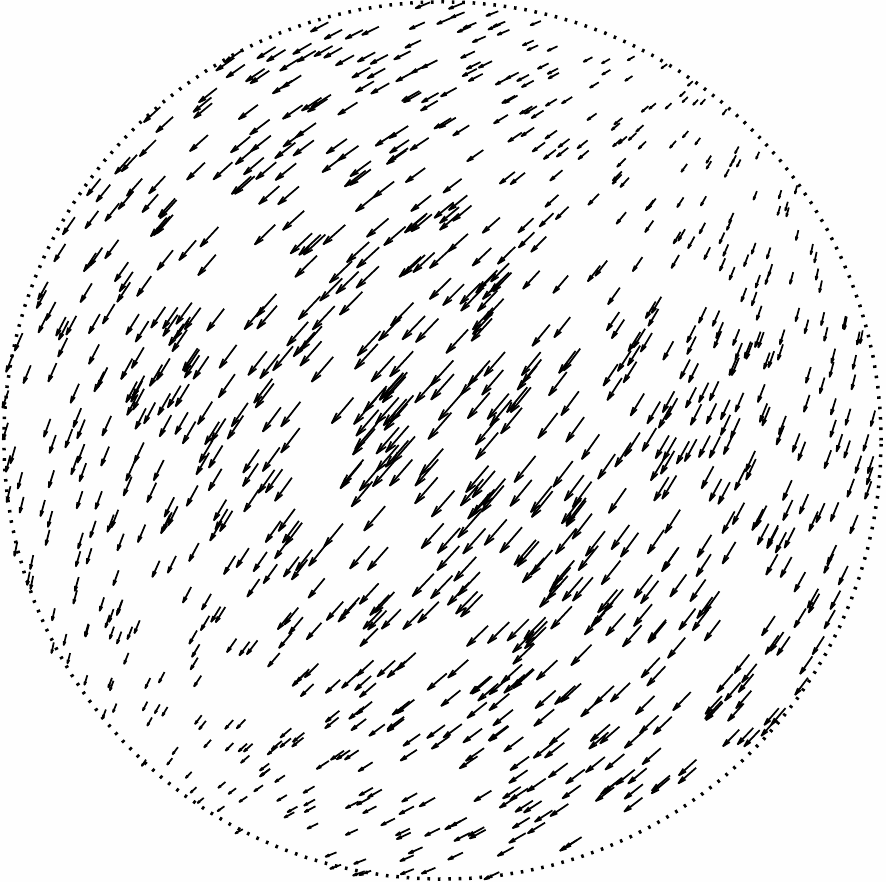} &
\includegraphics[width=0.3\textwidth]{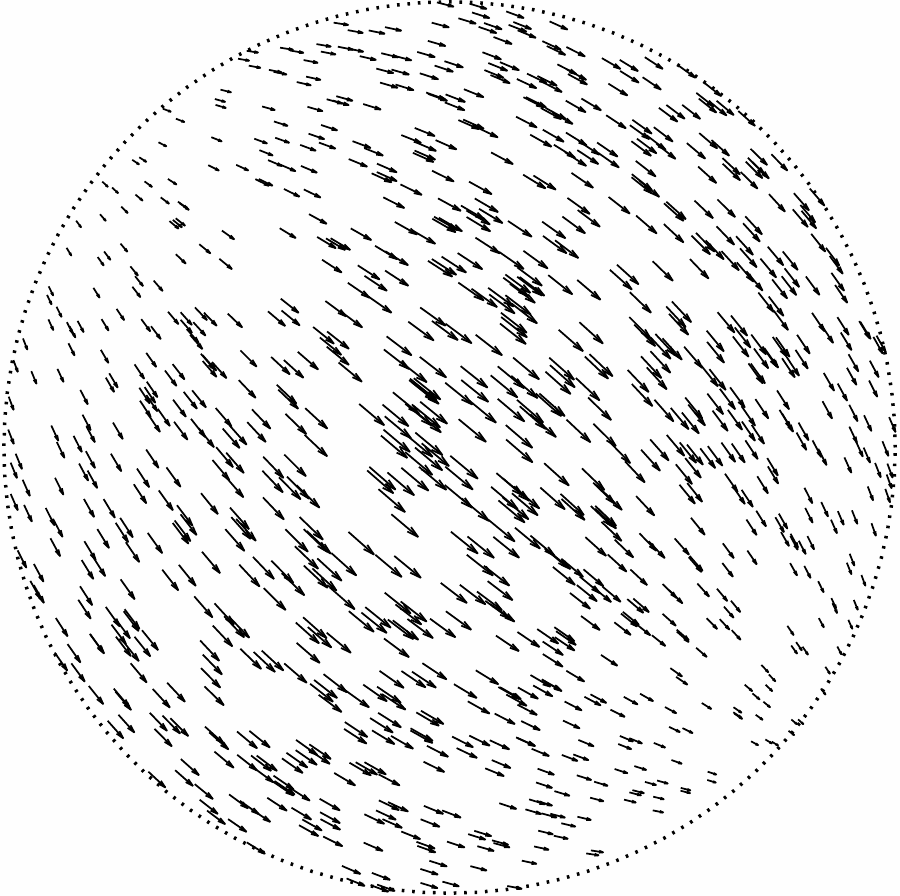} &
\end{tabular}
\caption{We plot the discrete vector fields
$\boldsymbol{\nabla} \boldsymbol{\varphi}_1$ (left), and $\boldsymbol{\nabla}
\boldsymbol{\varphi}_2$ (right)}.
\label{fig20}
\end{figure}

\section{Proof of Theorem \ref{thm1}} \label{proofthm1}

\subsection{Notation}  \label{notation1}
Recall that $X$ is a compact smooth Riemannian manifold with a measure $d\mu(x)
= q(x) dx$ where $q \in C^3(X)$ is a positive density function with respect to
the volume measure $dx$ on $X$. Furthermore, recall that $X$ is
isometrically embedded in $\mathbb{R}^d$, and that the kernel
$k_\varepsilon(x,y)$ is defined 
$$
k_\varepsilon(x,y) = h \left( \frac{|x-y|^2}{\varepsilon} \right),
$$ 
where $h$ is a smooth function of exponential decay, and where $|\cdot|$ denotes
the Euclidean norm on $\mathbb{R}^d$. Finally, recall that $w \in
C^3(X)$ is a positive weight function which defines the measure $d\hat{\mu}(x)
:= d\mu(x)/w(x)$. 

\subsection{Existence of bi-stochastic normalization} \label{existence}
First, we argue why there exists a normalization function $d \in
C^\infty(X)$ such that the kernel $b_\varepsilon : X \times X \rightarrow
\mathbb{R}$ defined by
$$
b_\varepsilon(x,y) = \frac{k_\varepsilon(x,y)}{d(x) d(y)}
\quad \text{is bi-stochastic with respect to} \quad d\hat{\mu}.
$$
By a result of Knopp and Sinkhorn \cite{KnoppSinkhorn1968} a positive
normalization function exists (also see Theorem 5.1 in
\cite{BorweinLewisNussbaum1994}).  However, if such a function $d(x)$ exists,
then it satisfies the equation 
$$
d(x) = \int_X \frac{k_\varepsilon(x,y)}{d(y)} d\hat{\mu}(y).
$$
Since the kernel $k_\varepsilon$ is smooth, we conclude that $d$ is also smooth.

\subsection{Useful Lemmas}
Our proof strategy is motivated by Lafon \cite{Lafon2004}, and we adopt similar
notation. Let
$$
m_0 = \int_{\mathbb{R}^d} h(|t|^2) dt,
\quad
\text{and}
\quad
m_2 = \int_{\mathbb{R}^d} t_1^2 h(|t|^2) dt,
$$
where $dt$ denotes Lebesgue measure on $\mathbb{R}^d$. Consider the integral
operator $G_\varepsilon : L^2(X,dx) \rightarrow L^2(X,dx)$ by
$$
(G_\varepsilon f)(x) = \frac{1}{\varepsilon^{d/2}} \int_X
k_\varepsilon(x,y) f(y) dy.
$$
The following Lemma is due to \cite{CoifmanLafon2006}, also see
\cite{Singer2006}.

\begin{lemma} \label{lem1} Suppose $0 < \gamma < 1/2$ and $f \in C^3(X)$. If $x
\in X$ is at least distance $\varepsilon^\gamma$ from $\partial X$, then
$$
(G_\varepsilon f) (x) = m_0 f(x) + \frac{\varepsilon m_2}{2} \left(E(x)f(x) -
\Delta f(x) \right) + \mathcal{O} \left( \varepsilon^{2} \right),
$$
where $E(x)$ is a scalar function of the curvature at $x$.
\end{lemma}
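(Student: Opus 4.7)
The plan is to follow the standard local asymptotic analysis for kernel integral operators on embedded Riemannian manifolds, essentially as in \cite{CoifmanLafon2006} and \cite{Singer2006}; indeed, the lemma as stated is proved there, so one option is simply to cite those works. For a self-contained argument, first \emph{localize}: since $h$ decays exponentially and $2\gamma < 1$, the tail $h(|x-y|^2/\varepsilon)$ is super-polynomially small on $\{|x-y| \ge \varepsilon^\gamma\}$; because $x$ lies at distance at least $\varepsilon^\gamma$ from $\partial X$, the ball $\{|x-y| < \varepsilon^\gamma\}$ sits inside $X$, and the far-field contribution is absorbed into the $\mathcal{O}(\varepsilon^2)$ remainder.

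On the near-field ball, introduce normal coordinates $u \in T_xX \cong \mathbb{R}^n$ centered at $x$ and write $y = \exp_x(u)$. Using the isometric embedding $X \hookrightarrow \mathbb{R}^d$, the Euclidean chord-length expands as $|x-y|^2 = |u|^2 + Q_x(u) + \mathcal{O}(|u|^6)$, where $Q_x$ is a homogeneous quartic built from the intrinsic Riemann tensor and the second fundamental form of the embedding; the Riemannian volume element is $\bigl(1 - \tfrac{1}{6}\mathrm{Ric}_x(u,u) + \mathcal{O}(|u|^4)\bigr)\,du$; and $f(\exp_x(u))$ is Taylor-expanded to third order in $u$. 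Substituting, Taylor-expanding $h\bigl((|u|^2 + Q_x(u))/\varepsilon\bigr)$ about $h(|u|^2/\varepsilon)$, and rescaling $u = \sqrt{\varepsilon}\,t$, the near-field integral reduces to a finite sum of Gaussian-type moment integrals $\int t^\alpha h^{(j)}(|t|^2)\,dt$ over $T_xX$, which by isometry of the embedding and decay of $h$ can be replaced by integrals over all of $\mathbb{R}^d$ up to super-polynomially small errors; this is what lets $m_0$ and $m_2$ be written as integrals over $\mathbb{R}^d$.

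Finally, evaluate the moments. Odd-order monomials in $t$ vanish by radial symmetry of $h(|t|^2)$. The zeroth-order term yields $m_0 f(x)$. At order $\varepsilon$, integrating $\tfrac{1}{2}\mathrm{Hess}\,f(x)(t,t)$ against $h(|t|^2)$ produces $-\tfrac{m_2}{2}\Delta f(x)$ in the paper's positive-semidefinite convention for $\Delta$, while $Q_x(t)$ paired with $h'$ and the Ricci correction from the volume form combine into $\tfrac{m_2}{2} E(x) f(x)$ for some scalar $E(x)$ depending only on the intrinsic and extrinsic curvatures at $x$. The $\mathcal{O}(\varepsilon^2)$ remainder is controlled by the $C^3$ Taylor remainder of $f$ together with polynomial-weighted moments of $h$ and $h'$, all finite by exponential decay. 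The main obstacle is the bookkeeping of the three distinct order-$\varepsilon$ contributions (the quartic chord-length correction, the volume-form expansion, and the Hessian of $f$) and verifying that they assemble into the stated compact form $\tfrac{m_2}{2}(E(x)f(x) - \Delta f(x))$ rather than leaving loose curvature-dependent cross terms; the scalar $E(x)$ is precisely the collector of everything extrinsic and Ricci-type that is \emph{not} killed by symmetry.
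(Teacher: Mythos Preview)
The paper does not prove this lemma at all --- it simply attributes it to \cite{CoifmanLafon2006} and \cite{Singer2006} and moves on --- and you correctly identify this citation as sufficient. Your additional self-contained sketch faithfully reproduces the standard argument from those references (localize via exponential decay, pass to normal coordinates, Taylor-expand the chord length, volume form, and $f$, rescale, and collect moments by parity), so there is nothing to correct.
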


\noindent Additionally, we need an expansion for $G_\varepsilon f$ near the boundary. The
following result is due to \cite{Lafon2004}.

\begin{lemma} \label{lem2}
Suppose $0 < \gamma < 1/2$ and $f \in C^3(X)$. Then uniformly over $x \in X$
within distance $\varepsilon^{\gamma}$ of $\partial X$
$$
G_\varepsilon f(x) = m_{0,\varepsilon}(x) f(x_0) + \sqrt{\varepsilon}
m_{1,\varepsilon}(x) \frac{\partial f}{\partial n} (x_0) +
\mathcal{O}(\varepsilon),
$$
where $x_0$ is the closest point in $\partial X$ to $x$ (with respect to
Euclidean distance), and $m_{0,\varepsilon}(x)$ and $m_{1,\varepsilon}(x)$ are
bounded functions of $x$ and $\varepsilon$.
\end{lemma}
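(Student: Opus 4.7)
The plan is to reduce $G_\varepsilon f(x)$ to a rescaled model integral over a half-space, in analogy with the proof of Lemma~\ref{lem1} but adapted to the boundary. Because $h$ has exponential decay, contributions from $y$ with $|y - x| \ge \varepsilon^{1/2 - \delta}$ are negligible beyond any polynomial order in $\varepsilon$ for sufficiently small $\delta > 0$. Since $x$ lies within $\varepsilon^\gamma$ of $x_0$, this confines the relevant region of integration to a single coordinate chart at $x_0$. In such a chart, choose coordinates so that the outward unit normal to $\partial X$ at $x_0$ is $e_d$, so that $\partial X$ is locally the graph $\{y_d = \phi(y_1,\ldots,y_{d-1})\}$ with $\phi(0) = 0$ and $\nabla\phi(0) = 0$, and so that the Riemannian volume element reads $(1 + \mathcal{O}(|y - x_0|^2))\,dy$.

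Writing $x = x_0 - s\,e_d$ with $s := |x - x_0|$, perform the change of variables $y = x + \sqrt{\varepsilon}\,w$. This turns the kernel factor into $h(|w|^2)$ and the integration region into $\{w : w_d \le \tilde{s} + \mathcal{O}(\sqrt{\varepsilon}\,|w_\parallel|^2)\}$, where $\tilde{s} := s/\sqrt{\varepsilon}$ and $w_\parallel := (w_1,\ldots,w_{d-1})$. A first-order Taylor expansion of $f$ around $x$, combined with the rotational symmetry of $h$ (which kills the tangential components of $\nabla f(x)$) and smoothness of the Jacobian, yields
$$
G_\varepsilon f(x) = M_0(\tilde{s})\,f(x) + \sqrt{\varepsilon}\,M_1(\tilde{s})\,\frac{\partial f}{\partial n}(x) + \mathcal{O}(\varepsilon),
$$
with $M_0(\tilde{s}) := \int_{w_d \le \tilde{s}} h(|w|^2)\,dw$ and $M_1(\tilde{s}) := \int_{w_d \le \tilde{s}} w_d\,h(|w|^2)\,dw$. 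Re-anchoring the expansion at $x_0$ via $f(x) = f(x_0) - s\,\frac{\partial f}{\partial n}(x_0) + \mathcal{O}(s^2)$ and $\frac{\partial f}{\partial n}(x) = \frac{\partial f}{\partial n}(x_0) + \mathcal{O}(s)$, and then setting
$$
m_{0,\varepsilon}(x) := M_0(\tilde{s}), \qquad m_{1,\varepsilon}(x) := M_1(\tilde{s}) - \tilde{s}\,M_0(\tilde{s}),
$$
produces the advertised form of the expansion, with the $\mathcal{O}(s^2)$, Jacobian, and manifold curvature remainders swept into $\mathcal{O}(\varepsilon)$.

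The main obstacle is maintaining uniformity of the remainder as $x$ approaches $\partial X$: since $\gamma < 1/2$, the scaled distance $\tilde{s}$ can in principle grow like $\varepsilon^{\gamma-1/2}$, so one must exploit the exponential decay of $h$ to control $M_0(\tilde{s})$ and the combination $M_1(\tilde{s}) - \tilde{s}\,M_0(\tilde{s})$ on this range, and simultaneously verify that the perturbations coming from the graph $\phi$ (through $\mathcal{O}(\sqrt{\varepsilon}|w_\parallel|^2)$ corrections to the rescaled boundary) and from the induced metric contribute at most $\mathcal{O}(\varepsilon)$. This uniform bookkeeping is the technical heart of the argument; once it is in place, the extraction of the two leading boundary moments is routine and the lemma follows.
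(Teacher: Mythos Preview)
The paper does not prove Lemma~\ref{lem2}; it simply quotes it from Lafon's thesis \cite{Lafon2004} and uses it as a black box in Propositions~\ref{prop2} and~\ref{prop4}. So there is no in-paper proof to compare against. Your sketch is the standard boundary-layer argument one would expect to find in \cite{Lafon2004}: localize via the decay of $h$, flatten the boundary in a normal chart at $x_0$, rescale $y = x + \sqrt{\varepsilon}\,w$, and read off the first two half-space moments of $h$.

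One point deserves more care than you give it. Your proposed coefficient $m_{1,\varepsilon}(x) = M_1(\tilde s) - \tilde s\,M_0(\tilde s) = \int_{w_d \le \tilde s} (w_d - \tilde s)\,h(|w|^2)\,dw$ is \emph{not} bounded as $\tilde s \to \infty$: for large $\tilde s$ the integral tends to $-\tilde s\,m_0$, so on the full strip $\{d(x,\partial X) \le \varepsilon^\gamma\}$ with $\gamma < 1/2$ it grows like $\varepsilon^{\gamma - 1/2}$. The exponential decay of $h$ does not rescue this, because the divergence comes from the shift $-\tilde s$, not from a tail. Likewise, the $\mathcal{O}(s^2)$ remainder from re-anchoring $f$ at $x_0$ is $\mathcal{O}(\varepsilon^{2\gamma})$, not $\mathcal{O}(\varepsilon)$. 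You correctly flag uniformity in $\tilde s$ as ``the technical heart,'' but the mechanism you name (exponential decay of $h$) does not by itself close the gap; one has to either restrict the statement to a strip of width $O(\sqrt{\varepsilon})$, allow $m_{1,\varepsilon}$ to grow mildly, or weaken the remainder to $\mathcal{O}(\varepsilon^{2\gamma})$. Since the paper only ever uses Lemma~\ref{lem2} in situations where the $\sqrt{\varepsilon}\,m_{1,\varepsilon}\,\partial_n f$ contributions cancel in a ratio (because the test functions satisfy Neumann conditions), this looseness is harmless for the downstream results, but it is a genuine imprecision in the lemma as stated and in your proposed resolution of it.
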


\subsection{Intermediate results}

Before proving Theorem \ref{thm1} we establish two results: Proposition
\ref{prop1} and Proposition \ref{prop2}, which provide local expansions for the
bi-stochastic operator $B_{\varepsilon}$ away from the boundary, and near the
boundary, respectively.

\begin{proposition} \label{prop1}
Suppose $0 < \gamma < 1/2$ and $f \in C^3$. If $x \in X$ is at least distance
$\varepsilon^\gamma$ from $\partial X$, then
$$
B_{\varepsilon} f(x) = f(x)- \frac{ \varepsilon m_2}{2 m_0} \left( \frac{ \Delta
\left( f(x) \left( \frac{q(x)}{w(x)} \right)^{1/2} \right) }{ \left(
\frac{q(x)}{w(x)}\right)^{1/2} } - \frac{ \Delta \left(
\frac{q(x)}{w(x)}\right)^{1/2} }{ \left( \frac{q(x)}{w(x)}\right)^{1/2} } f(x)
\right) + \mathcal{O}\left( \varepsilon^{2} \right).
$$
\end{proposition}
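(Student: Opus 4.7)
\medskip
\noindent\textbf{Proof plan for Proposition \ref{prop1}.} The strategy is to reduce everything to two applications of Lemma \ref{lem1}: one to solve the bi-stochasticity fixed-point equation for $d(x)$ asymptotically, and one to expand the operator $B_\varepsilon$ itself. Throughout, write $\rho := q/w$, so that $d\hat\mu = \rho\, dx$. The bi-stochasticity condition $\int_X b_\varepsilon(x,y)\, d\hat\mu(y) = 1$ is equivalent to the scalar fixed-point equation
$$
d(x) \;=\; \int_X k_\varepsilon(x,y)\,\frac{\rho(y)}{d(y)}\, dy.
$$
Since the right-hand side scales like $\varepsilon^{d/2}$, I introduce the rescaling $d(x) = \varepsilon^{d/4}\tilde d(x)$, which turns the equation into the clean form $\tilde d(x) = G_\varepsilon(\rho/\tilde d)(x)$.

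\medskip
\noindent First I would solve this fixed-point equation perturbatively for $x$ at distance at least $\varepsilon^\gamma$ from $\partial X$. At leading order, Lemma \ref{lem1} gives $\tilde d(x)^2 = m_0\,\rho(x) + O(\varepsilon)$, so the natural ansatz is
$$
\tilde d(x) \;=\; \sqrt{m_0\,\rho(x)}\,\bigl(1 + \varepsilon\, a(x) + O(\varepsilon^2)\bigr).
$$
Expanding $\rho/\tilde d$ to the same order, applying Lemma \ref{lem1} (justified because $\rho/\tilde d\in C^3$ by the smoothness of $d$ established in Section \ref{existence}, combined with a uniform lower bound on $\tilde d$), and matching coefficients at order $\varepsilon$ yields explicitly
$$
a(x) \;=\; \frac{m_2}{4 m_0}\left(E(x) - \frac{\Delta\sqrt{\rho(x)}}{\sqrt{\rho(x)}}\right).
$$

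\medskip
\noindent Next I would expand $B_\varepsilon f$ itself. A direct computation gives
$$
B_\varepsilon f(x) \;=\; \frac{1}{\tilde d(x)}\, G_\varepsilon\!\left(\frac{f\rho}{\tilde d}\right)(x),
$$
so another application of Lemma \ref{lem1} (to the $C^3$ function $f\rho/\tilde d$) produces a two-term expansion in $\varepsilon$. Multiplying by $1/\tilde d(x) = (m_0\rho)^{-1/2}(1 - \varepsilon a + O(\varepsilon^2))$ and collecting terms yields
$$
B_\varepsilon f(x) - f(x)
\;=\; \frac{\varepsilon m_2}{2 m_0 \sqrt{\rho(x)}}\bigl(f(x)\,\Delta\sqrt{\rho(x)} - \Delta(f\sqrt{\rho})(x)\bigr) \;+\; O(\varepsilon^2),
$$
which is exactly the claimed formula. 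The key algebraic miracle here is that the curvature contribution $E(x)f(x)$ appears once from the fixed-point expansion (via $a$) and once from the direct expansion of $G_\varepsilon(f\rho/\tilde d)$ with opposite sign, so the unknown curvature scalar $E$ drops out completely, leaving only the symmetric Laplacian-type expression.

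\medskip
\noindent The main obstacle is the perturbative justification in the second paragraph: I need to know not merely that the ansatz formally works, but that the true solution $\tilde d$ admits such an expansion with a uniform $O(\varepsilon^2)$ remainder in $C^3$. This requires a uniform positive lower bound on $\tilde d$ (so $\rho/\tilde d$ stays bounded), together with uniform $C^3$ estimates on $\tilde d$ in terms of $\varepsilon$, both of which follow by bootstrapping the fixed-point equation using the smoothness of $k_\varepsilon$ and compactness of $X$. Once the remainder is controlled in $C^3$ away from the boundary (where Lemma \ref{lem1} applies), everything else is bookkeeping.
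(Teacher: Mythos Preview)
Your proposal is correct and follows essentially the same approach as the paper: apply Lemma \ref{lem1} once to the bi-stochasticity equation to extract the leading behavior of $d$, apply it a second time to $B_\varepsilon f$, and combine so that the curvature scalar $E$ cancels. The only cosmetic difference is that the paper keeps $d$ symbolic in the intermediate expansions and divides the expansion of $B_\varepsilon f$ by that of $d^2$ before substituting $d \approx m_0^{1/2}\varepsilon^{d/4}(q/w)^{1/2}$ at the very end, whereas you solve explicitly for the order-$\varepsilon$ correction $a(x)$ first; this is bookkeeping rather than a different idea.
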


\begin{proof}
Since $b_{\varepsilon}$ is bi-stochastic with respect to $d\hat{\mu}$, we have
the following integral equation
$$
1 = \int_X b_\varepsilon(x,y) d\hat{\mu}(y) = \int_X
\frac{k_\varepsilon(x,y)}{d(x) d(y)} \frac{q(y)}{w(y)} dy.
$$
Expanding this integral equation using Lemma \ref{lem1} gives
$$
1 = \varepsilon^{d/2} \frac{1}{d} \left( m_0 \frac{1}{d} \cdot
\frac{q}{w} + \frac{\varepsilon m_2}{2} \left( E \cdot \frac{1}{d}
\cdot \frac{q}{w} - \Delta \left( \frac{1}{d} \cdot
\frac{q}{w} \right) \right) + \mathcal{O}\left(\varepsilon^2 \right)
\right),
$$
where the argument $x$ of the functions $d(x)$, $q(x)$,
$w(x)$, and $E(x)$ have been suppressed for notational brevity.
By rearranging terms we have
$$
d^2 = \varepsilon^{d/2} m_0
\frac{q}{w} \left( 1 + \frac{\varepsilon m_2}{2 m_0}
\left( E - \frac{\Delta\left( q /(d\cdot w) \right)
}{ q/(d \cdot w)} \right) +
\mathcal{O}(\varepsilon^2) \right).
$$
Next, we compute an expansion for $B_{\varepsilon}$ applied to an
arbitrary function $f \in C^3(X)$
$$
B_{\varepsilon} f(x) = \int_X \frac{k_\varepsilon(x,y)}{d(x) d(y)} f(y)
\frac{q(y)}{w(y)} dy.
$$
Expanding the right hand side using Lemma \ref{lem1} gives
$$
B_{\varepsilon} f = \varepsilon^{d/2} \frac{1}{d} \left( 
m_0 \frac{1}{d} \cdot f \cdot \frac{q}{w} 
+ \frac{\varepsilon
m_2}{2} \left( E \cdot \frac{1}{d} \cdot f \cdot \frac{q}{w} - 
\Delta \left( \frac{1}{d} \cdot f \cdot \frac{q}{w} \right)
\right) + \mathcal{O} \left( \varepsilon^2 \right) \right).
$$
Rearranging terms gives
$$
B_{\varepsilon} f = m_0 \varepsilon^{d/2} f \cdot \frac{1}{d^2} \cdot
\frac{q}{w} \left( 1 + \frac{\varepsilon m_2}{2 m_0}
\left( E + \frac{\Delta \left(f \cdot q/(d \cdot w)
\right)}{f \cdot q/(d \cdot w)} \right) 
+ \mathcal{O} \left( \varepsilon^2 \right) \right).
$$
Substituting in the expansion for $d^2$ into this equation yields
$$
B_{\varepsilon} f = f \left( 1 + \frac{\varepsilon m_2}{2
m_0} \left( E - \frac{\Delta \left( f \cdot
q/(d \cdot w) \right)}{f \cdot
q/(d \cdot w)} \right) \right) \left(
1 + \frac{\varepsilon m_2}{2 m_0}
\left( E - \frac{\Delta \left( q /(d \cdot w)
\right)}{ q /(d \cdot w)} \right) \right)^{-1}
+ \mathcal{O} \left( \varepsilon^2 \right).
$$
Expanding the inverse term in a Taylor expansion gives
$$
B_{\varepsilon} f = f - \frac{\varepsilon m_2}{2 m_0} \left(
\frac{\Delta \left( f \cdot q/(d \cdot w)
\right)}{q/(d \cdot w)} 
- \frac{\Delta \left( q /(d \cdot w)
\right)}{ q /(d \cdot w)} f
\right) + \mathcal{O}\left(\varepsilon^2 \right).
$$
The expression for $d^2$ implies that
$$
d = m_0^{1/2} \varepsilon^{d/4} \frac{q^{1/2}}{w^{1/2}} \left(1 +
\mathcal{O}\left( \varepsilon \right) \right).
$$
Substituting this representation for $d$ into the expansion for $B_\varepsilon
f$ yields
$$
B_{\varepsilon} f = f - \frac{\varepsilon m_2}{2 m_0} \left( \frac{\Delta \left(
f \cdot \left(\frac{q}{w}\right)^{1/2} \right)} { \left(\frac{q}{w}\right)^{1/2}
} - \frac{\Delta  \left( \frac{q}{w} \right)^{1/2} }{ \left( \frac{q}{w}
\right)^{1/2} } f \right) + \mathcal{O}\left(\varepsilon^2 \right),
$$
as was to be shown.
\end{proof}

\begin{proposition} \label{prop2}
Let $k > 0$ be fixed.  Suppose $0 < \gamma < 1/2$ and $f \in E_k$.  Then
uniformly over $x \in X$ within distance $\varepsilon^\gamma$ of $\partial
X$ 
$$
B_{\varepsilon} f(x) = f(x_0) + \mathcal{O}(\varepsilon),
$$
where $x_0$ is the closest point on $\partial X$ to $x$ (with respect to
Euclidean distance).
\end{proposition}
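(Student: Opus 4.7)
The plan is to parallel the argument of Proposition \ref{prop1}, but to substitute the boundary expansion (Lemma \ref{lem2}) for the interior expansion (Lemma \ref{lem1}), and then to invoke the Neumann boundary condition satisfied by every $f \in E_k$ in order to kill the $\sqrt{\varepsilon}$ terms that such an expansion naturally produces.

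First I would use the fixed-point equation from Section \ref{existence}, $d(x) = \int_X k_\varepsilon(x,y) q(y)/(w(y)d(y))\,dy$, to rewrite
\[
B_\varepsilon f(x) = \frac{G_\varepsilon\!\left(fq/(wd)\right)(x)}{G_\varepsilon\!\left(q/(wd)\right)(x)},
\]
where the smoothness of $d$ established in Section \ref{existence} makes $fq/(wd)$ and $q/(wd)$ legitimate $C^3$ test functions for Lemma \ref{lem2}. Letting $x_0$ denote the closest point of $\partial X$ to $x$, Lemma \ref{lem2} expands both numerator and denominator in the form $m_{0,\varepsilon}(x) A(x_0) + \sqrt{\varepsilon}\,m_{1,\varepsilon}(x)\,\partial_n A(x_0) + \mathcal{O}(\varepsilon)$. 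Factoring $m_{0,\varepsilon}(x)(q/(wd))(x_0)$ out of both, Taylor-expanding the reciprocal of the denominator, and collecting terms gives, to leading nontrivial order,
\[
B_\varepsilon f(x) = f(x_0) + \sqrt{\varepsilon}\,\frac{m_{1,\varepsilon}(x)}{m_{0,\varepsilon}(x)}\,\frac{wd}{q}(x_0)\left[\partial_n\!\left(\frac{fq}{wd}\right)(x_0) - f(x_0)\,\partial_n\!\left(\frac{q}{wd}\right)(x_0)\right] + \mathcal{O}(\varepsilon).
\]

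The bracketed quantity collapses, by the product rule, to $\partial_n f(x_0)\cdot(q/(wd))(x_0)$, so the entire $\sqrt{\varepsilon}$ coefficient reduces to the scalar multiple $(m_{1,\varepsilon}(x)/m_{0,\varepsilon}(x))\,\partial_n f(x_0)$. Since $f \in E_k$ is a Neumann eigenfunction of $\Delta$, one has $\partial_n f \equiv 0$ on $\partial X$, so this coefficient vanishes identically and the expansion reduces to $B_\varepsilon f(x) = f(x_0) + \mathcal{O}(\varepsilon)$, as claimed. The main obstacle will be ensuring that the manipulations and error terms are genuinely uniform over the strip $\{x : \mathrm{dist}(x,\partial X) < \varepsilon^\gamma\}$; in particular I need $m_{0,\varepsilon}(x)$ to be bounded below by a positive constant independent of $\varepsilon$ and $x$ in that strip, so that the geometric-series expansion of $1/G_\varepsilon(q/(wd))(x)$ is legitimate and the $\mathcal{O}(\varepsilon)$ remainders survive division without blowing up. This uniformity follows from the uniformity clause in Lemma \ref{lem2} combined with the positivity and smoothness of $q$, $w$, and $d$, but some bookkeeping is required to push the error through the quotient cleanly.
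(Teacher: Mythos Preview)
Your proposal is correct and follows essentially the same approach as the paper: apply Lemma \ref{lem2} to both the integral defining $d(x)$ and the integral defining $B_\varepsilon f(x)$, take their ratio, and use the Neumann condition $\partial_n f(x_0) = 0$ to eliminate the $\sqrt{\varepsilon}$ term. Your quotient representation $B_\varepsilon f = G_\varepsilon(fq/(wd))/G_\varepsilon(q/(wd))$ is a slightly cleaner repackaging of the paper's argument (which instead isolates an expression for $d(x)\,d(x_0)$ and substitutes), and your explicit flag about the need for a uniform lower bound on $m_{0,\varepsilon}$ is a detail the paper leaves implicit.
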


\begin{proof}
Since $B_{\varepsilon}$ is bi-stochastic with respect to
$q(y)/w(y) dy$ we have the following integral equation
$$
1 = \int_X \frac{k_\varepsilon(x,y)}{d(x) d(y)}
\frac{q(y)}{w(y)} dy.
$$
Expanding the integral equation using Lemma \ref{lem2} gives
$$
1 = \varepsilon^{d/2} \frac{1}{d} \left( m_{0,\varepsilon}
\frac{q_{x_0}}{d_{x_0} \cdot w_{x_0}} + \varepsilon^{1/2} m_{1,\varepsilon}
\partial_n \left(  \frac{q_{x_0}}{d_{x_0} \cdot w_{x_0}} \right) +
\mathcal{O} \left( \varepsilon \right) \right), 
$$
where $\partial_n$ denotes the normal derivative. Note, that for notional
brevity the argument $x$ of the functions $d(x)$, $m_{0,\varepsilon}(x)$, and
$m_{1,\varepsilon}(x)$ has been suppressed, and we write $d_{x_0} = d(x_0)$,
$q_{x_0} = q(x_0)$, and $w_{x_0} = w(x_0)$. Rearranging terms gives
$$
d \cdot d_{x_0} = \varepsilon^{d/2} \frac{q_{x_0}}{w_{x_0}} \left(
m_{0,\varepsilon} + \varepsilon^{1/2} m_{1,\varepsilon}  \frac{\partial_n
\left( q_{x_0} /(d_{x_0} \cdot w_{x_0} \right)}{q_{x_0}/(d_{x_0} \cdot w_{x_0})}
+ \mathcal{O} \left(\varepsilon \right) \right).
$$
Next we compute the expansion for the operator $B_{\varepsilon}$ applied to a
function $f \in E_k$
$$
B_{\varepsilon} f(x) = \int_X \frac{k_\varepsilon(x,y)}{d(x) d(y)}
f(y) \frac{q(y)}{w(y)} dy.
$$
Applying Lemma \ref{lem2} to the right hand side yields
$$
B_{\varepsilon} f = \varepsilon^{d/2} \frac{1}{d \cdot d_{x_0}}
\frac{q_{x_0}}{w_{x_0}} \left( m_{0,\varepsilon}  f_{x_0} +
\varepsilon^{1/2} m_{1,\varepsilon}  \frac{\partial_n \left( 
f_{x_0} \cdot q_{x_0} /\left( d_{x_0} \cdot w_{x_0}\right) \right)}{ f_{x_0}
\cdot q_{x_0}/\left(d_{x_0} \cdot w_{x_0} \right) } +
\mathcal{O} \left( \varepsilon \right) \right).
$$
Substituting in the expansion for $d \cdot d_{x_0}$ into this equation gives
$$
B_{\varepsilon} f =  f_{x_0} \left( 1 + 
\varepsilon^{1/2} \frac{m_{1,\varepsilon}}{m_{0,\varepsilon}} 
\frac{\partial_n \left( f_{x_0}  \cdot q_{x_0} / (d_{x_0} \cdot w_{x_0})
\right)}{f_{x_0} \cdot
 q_{x_0} /(d_{x_0} \cdot w_{x_0})} \right)\left(
1 + \varepsilon^{1/2}
\frac{m_{1,\varepsilon}}{m_{0,\varepsilon}}  \frac{\partial_n \left(
 q_{x_0} /(d_{x_0} \cdot  w_{x_0}) \right)}{
q_{x_0}/(d_{x_0} \cdot w_{x_0})} \right)^{-1} + \mathcal{O}(\varepsilon).
$$
Using the fact that $f \in E_k$ verifies the Neumann condition at $x_0$, we
can take $f$ out of any derivative across the boundary, and the expression
simplifies to
$$
B_{\varepsilon} f(x)  = f(x_0) + \mathcal{O}\left( \varepsilon \right),
$$
which completes the proof.
\end{proof}

\subsection{Proof of Theorem \ref{thm1}}
\begin{proof}[Proof of Theorem \ref{thm1}]
It remains to combine the results of Propositions \ref{prop1} and \ref{prop2}.
Define the set 
$$
X_\varepsilon := \{ x \in X : d(x,\partial X) > \varepsilon^\gamma \},
$$
where $d(x,\partial X) = \inf_{x_0 \in \partial X} |x -
x_0|$, and consider the regions $X_\varepsilon$ and $X \setminus
X_\varepsilon$ illustrated in Figure \ref{fig2}.

\begin{figure}[h!]
\centering
\includegraphics[width=.4\textwidth]{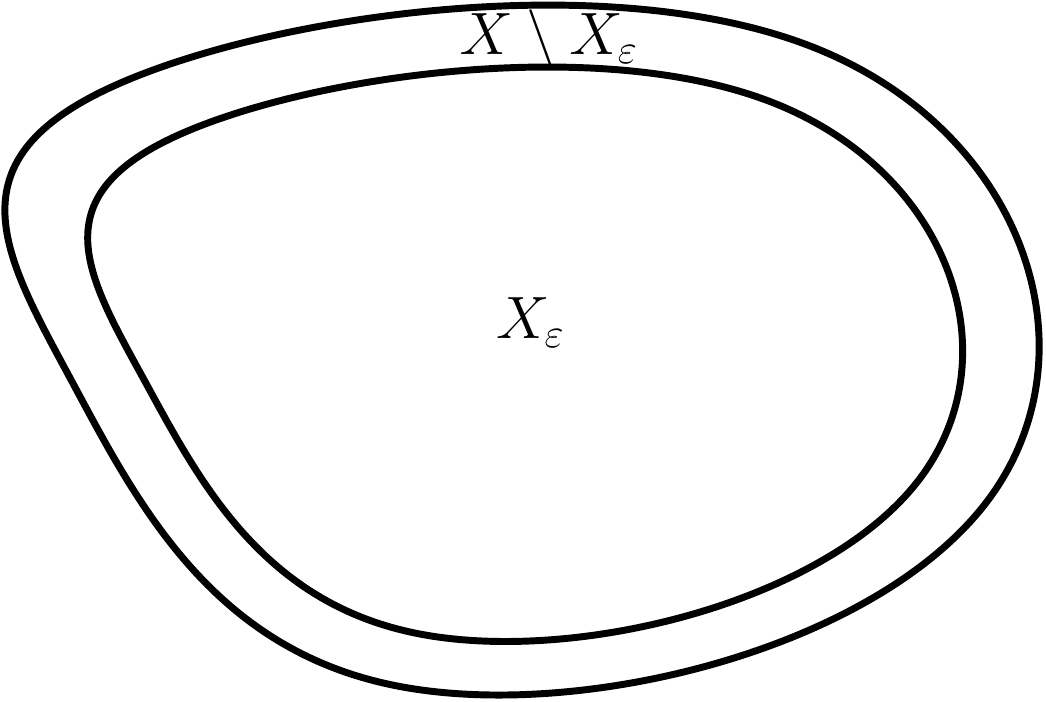}
\caption{We partition $X$ into $X_\varepsilon$ and $X \setminus X_\varepsilon$
where $X_\varepsilon = \{x \in X : d(x,\partial X) > \varepsilon^\gamma\}$.}
\label{fig2}
\end{figure}

\noindent Suppose $f \in E_k$ is given. First, if $x \in X_\varepsilon$, then we apply
Proposition \ref{prop1} to conclude that
$$
\frac{I - B_{\varepsilon}}{\varepsilon} f(x) = \frac{ m_2}{2
m_0} \left( \frac{ \Delta \left( f(x) \left( \frac{q(x)}{w(x)} \right)^{1/2}
\right)}{ \left( \frac{q(x)}{w(x)} \right)^{1/2}} - \frac{ \Delta \left(
\frac{q(x)}{w(x)} \right)^{1/2} }{ \left( \frac{q(x)}{w(x)} \right)^{1/2}} f(x)
\right) + \mathcal{O}\left( \varepsilon \right),
$$
that is to say, the desired result holds pointwise in the region
$X_\varepsilon$.  Second, by Proposition \ref{prop2} uniformly over $x \in X \setminus X_\varepsilon$
$$
\frac{I - B_{\varepsilon}}{\varepsilon} f(x) = \frac{f(x_0) - f(x)}{\varepsilon}
+ \mathcal{O}(1).
$$
Since $X$ is compact, we can use an $L^\infty$ bound to control the contribution
from the region $X \setminus X_\varepsilon$ to the $L^2$ norm; since the volume
of the region is order $\varepsilon^\gamma$, it follows by Proposition
\ref{prop2} that the contribution to the $L^2$ norm is at most order
$$
\varepsilon^\gamma \left(\frac{f(x_0) - f(x)}{\varepsilon} + 1\right) = 
\mathcal{O} \left( \varepsilon^{3 \gamma - 1} \right).
$$
Indeed, since $|x_0 - x| < \varepsilon^\gamma$, and $f$ satisfies the Neumann
condition at $x_0$ it follows that $f(x_0) = f(x) +
\mathcal{O}\left(\varepsilon^{2\gamma}\right)$, which implies that the entire
expression is at most order $\varepsilon^{3 \gamma -1}$. Choosing $\varepsilon =
0.49$ implies that the contribution of the region $X \setminus X_\varepsilon$ to
the $L^2$ norm goes to zero. Thus we conclude that 
$$
\frac{I - B_{\varepsilon}}{\varepsilon} f \rightarrow \frac{m_2}{2 m_0} \left(
\frac{\Delta \left( f \left(\frac{q}{w}\right)^{1/2} \right)}{\left( \frac{q}{w}
\right)^{1/2}}  - 
\frac{\Delta \left( \frac{q}{w} \right)^{1/2} }{\left(
\frac{q}{w}\right)^{1/2}}  f \right) \quad \text{in $L^2$ norm as} \quad
\varepsilon \rightarrow 0,
$$
as was to be shown.
\end{proof}

\subsection{Proof of Corollaries \ref{cor1}}
\begin{proof}[Proof of Corollary \ref{cor1}]
Recall that 
$$
q_\varepsilon(x) := \int_X k_\varepsilon(x,y) d\mu(y).
$$
If $y \in X$ is fixed, then $k_\varepsilon(\cdot,y) \in C^\infty(X)$ since
$k_\varepsilon(x,y) = h(-|x-y|^2/\varepsilon)$, where the function $h$ is smooth
by assumption. It follows that $q_\varepsilon \in C^\infty(X)$. Expanding
$q_\varepsilon$ via Lemma \ref{lem1} gives
$$
q_\varepsilon = \varepsilon^{d/2} m_0 q \left( 1 +
\frac{\varepsilon m_2}{2 m_0} \left( E - \frac{\Delta q}{q} \right) +
\mathcal{O} \left( \varepsilon^{2} \right) \right) = \varepsilon^{d/2} m_0
\left( q + \mathcal{O}(\varepsilon) \right),
$$
where the argument $x$ of the functions $q_\varepsilon(x)$, $E(x)$, and $q(x)$
has been suppressed for notational brevity. Substituting this expression for
$q_\varepsilon$ into the final step of the proof of Theorem \ref{thm1} yields Corollary
\ref{cor1}. 
\end{proof}

\subsection{Proof of Corollary \ref{cor2}}
\begin{proof}[Proof of Corollary \ref{cor2}]
The key observation in the proof is that the heat kernel $e^{-t \Delta}$ can be
expressed
$$
(1 - \varepsilon \Delta)^{t/\varepsilon} = e^{-t\Delta},
$$
see \cite{CoifmanLafon2006}.
Since the Neumann eigenfunctions of the Laplace-Beltrami operator are an
orthogonal basis for $L^2(X)$, any function $f \in L^2(X)$ can be written
$$
f = f_\varepsilon + g_\varepsilon
$$
where $f_\varepsilon \in E_k$ for some $k > 0$, and where $\|g_\varepsilon
\|_{L^2} \le \varepsilon$. Since $B_\varepsilon^{t/\varepsilon}$ and $e^{-t
\Delta}$ are bounded operators, by the triangle inequality
$$
\| B_\varepsilon^{t/\varepsilon} f - e^{-t \Delta} f\|_{L^2} \le
\|B_\varepsilon^{t/\varepsilon} f_\varepsilon  - e^{-t \Delta} f_\varepsilon\|_{L^2} +
\mathcal{O}(\varepsilon).
$$
However, since $f_\varepsilon \in E_k$ we can apply Corollary \ref{cor1} to
conclude that
$$
B_\varepsilon^{t/\varepsilon} f_\varepsilon = (1 - \varepsilon \Delta
f_\varepsilon + o(\varepsilon) )^{t/\varepsilon} = (1 - \varepsilon
\Delta)^{t/\varepsilon} f_\varepsilon + o(\varepsilon).
$$
where, in a slight abuse of notation, we have used $o(\varepsilon)$ to denote a
function whose $L^2$ norm divided by $\varepsilon$ goes to zero as $\varepsilon
\to 0$. Then observing that
$$
\| (1 - \varepsilon \Delta)^{t/\varepsilon} f_\varepsilon - e^{-t \Delta}
f_\varepsilon \|_{L^2} \rightarrow 0 \quad \text{as} \quad \varepsilon
\rightarrow
0,
$$
completes the proof.
\end{proof}

\section{Proof of Theorem \ref{thm2}} \label{proofthm2}

\subsection{Notation} 
In addition to the notation defined in Section \ref{notation1}, recall that
$d\nu(r) = p(r) dr$ is a given reference measure, where $p \in C^3(X)$ is a
positive density function with respect to the volume measure $dr$ on $X$, and
furthermore, recall that $d\hat{\nu}(r) = d\nu(r)/v(r)$ where $v \in C^3(X)$ is
a given positive weight function. 

\subsection{Existence of a bi-stochastic normalization}
The existence of a positive function $d \in C^\infty(X)$ such that the
kernel $c_\varepsilon : X \times X \rightarrow \mathbb{R}$ defined by 
$$
c_\varepsilon(x,y) = \frac{\int_X k_\varepsilon(x,r) k_\varepsilon(y,r)
d\hat{\nu}(r)}{d(x) d(y)} \quad \text{is bi-stochastic with respect to}
\quad
d\hat{\mu},
$$
follows from the same argument as in Section
\ref{existence}. 

\subsection{Intermediate results}
The proof of Theorem \ref{thm2} follows the same pattern as the proof of Theorem
\ref{thm1}. First, we establish expansions of $C_\varepsilon$ near and away from
the boundary, and then combine these expansions to complete the proof.

\begin{proposition} \label{prop3}
Suppose that $0 < \gamma < 1/2$ and $f \in C^3(X)$. If $x \in X$ is at least
distance $2 \varepsilon^\gamma$ from $\partial X$, then
$$
C_\varepsilon f = f - \frac{\varepsilon m_2}{2 m_0} \left( 
\frac{ \Delta \left( f  \left( \frac{q \cdot w}{v \cdot p} \right)^{1/2} \right)}{\left( \frac{q \cdot
w}{v \cdot p} \right)^{1/2}}
-
\frac{ \Delta \left( \frac{q \cdot w}{v \cdot p} \right)^{1/2} }{\left( \frac{q
\cdot w}{v \cdot p} \right)^{1/2}} f
+
\frac{ \Delta \left( f  \left( \frac{q \cdot p}{v \cdot w} \right)^{1/2}
\right)}{\left( \frac{q \cdot p}{v \cdot w} \right)^{1/2}}
-
\frac{ \Delta \left( \frac{q \cdot p}{v \cdot w} \right)^{1/2}
}{\left( \frac{q \cdot p}{v \cdot w} \right)^{1/2}}  f
\right) + \mathcal{O} \left( \varepsilon^2 \right).
$$
\end{proposition}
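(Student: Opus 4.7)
The plan is to mirror the proof of Proposition \ref{prop1}, but with a double application of Lemma \ref{lem1}, one for each of the two nested integrals hidden inside $c_\varepsilon(x,y)$. Writing out the operator gives
$$
C_\varepsilon f(x) = \frac{1}{d(x)} \int_X k_\varepsilon(x,r)\,\frac{p(r)}{v(r)} \left( \int_X k_\varepsilon(y,r)\, \frac{f(y)\,q(y)}{d(y)\,w(y)}\,dy \right) dr,
$$
and the bi-stochastic condition $\int_X c_\varepsilon(x,y)\,d\hat\mu(y) = 1$ gives the analogous equation with $f\equiv 1$ in the inner integrand. The point of pulling the order of integration this way is that both the inner integral (in $y$, centered at $r$) and the resulting outer integral (in $r$, centered at $x$) are of the form handled by $G_\varepsilon$ up to a factor of $\varepsilon^{d/2}$.

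The first step is to apply Lemma \ref{lem1} to the inner integral to expand
$$
\int_X k_\varepsilon(y,r)\, \frac{f(y)\,q(y)}{d(y)\,w(y)}\,dy = \varepsilon^{d/2}\left( m_0\, g(r) + \tfrac{\varepsilon m_2}{2}\bigl(E(r)g(r) - \Delta g(r)\bigr) + \mathcal{O}(\varepsilon^2)\right),
$$
where $g := fq/(dw)$. This expansion is valid uniformly in $r$ at distance at least $\varepsilon^\gamma$ from $\partial X$, and the hypothesis $d(x,\partial X) \ge 2\varepsilon^\gamma$ is exactly what guarantees that the $r$-support of $k_\varepsilon(x,r)$ (concentrated on scale $\sqrt\varepsilon \ll \varepsilon^\gamma$) stays in this interior region. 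Then multiplying by $p(r)/v(r)$ and applying Lemma \ref{lem1} a second time in the $r$ integral yields
$$
C_\varepsilon f(x) = \frac{\varepsilon^d m_0^2}{d(x)^2} \cdot \frac{f(x)\,p(x)\,q(x)}{v(x)\,w(x)}\left(1 + \tfrac{\varepsilon m_2}{2m_0}\bigl(\mathcal E_{\text{in}}(x) + \mathcal E_{\text{out}}(x)\bigr) + \mathcal{O}(\varepsilon^2)\right),
$$
where $\mathcal E_{\text{in}}$ collects the $\Delta$-corrections coming from the inner expansion (derivatives of $fq/(dw)$, carried out through the outer integral at leading order) and $\mathcal E_{\text{out}}$ collects those coming from the outer expansion (derivatives of the product $p g/v$ in the $r$-variable). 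The curvature terms $E(x)$ appear in each source additively and must, as in Proposition \ref{prop1}, eventually cancel between numerator and the denominator obtained from the bi-stochastic normalization.

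The second step is to set $f\equiv 1$ in the same computation to obtain an implicit equation for $d(x)^2$, whose leading order solves to $d(x) = \varepsilon^{d/2} m_0 \bigl(pq/(vw)\bigr)^{1/2}(x)\bigl(1+\mathcal{O}(\varepsilon)\bigr)$. Dividing the expansion of $C_\varepsilon f$ by the expansion of $C_\varepsilon 1$, expanding $(1+\mathcal{O}(\varepsilon))^{-1}$ in a Neumann series, and substituting the leading form of $d$ then cancels the curvature contributions and groups the remaining $\Delta$-terms into two quotients: one in which the Laplacian acts through the outer variable (producing the $(qw/(vp))^{1/2}$ quotient, i.e.\ the weight controlled by the outer kernel in $r$) and one in which it acts through the inner variable (producing the $(qp/(vw))^{1/2}$ quotient). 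The same algebraic identity used at the end of the proof of Proposition \ref{prop1},
$$
\frac{\Delta(f g^{1/2})}{g^{1/2}} - \frac{\Delta(g^{1/2})}{g^{1/2}} f = \Delta f + \frac{\nabla f \cdot \nabla g}{g} + \tfrac12\frac{(\nabla g)^2}{g^2} f - \tfrac12\frac{(\nabla g)^2}{g^2} f,
$$
lets one recognize each of the two correction groups as a quotient of Laplacians in the stated form.

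The main obstacle will be the bookkeeping of the nested expansions: carefully tracking which terms come from $\mathcal E_{\text{in}}$ versus $\mathcal E_{\text{out}}$, verifying that the curvature contribution $E(x)$ cancels out the same way it did in Proposition \ref{prop1} (it enters both $C_\varepsilon f$ and $C_\varepsilon 1$ additively at order $\varepsilon$), and rearranging the remaining second-order derivative terms into exactly the two symmetric $\Delta$-quotients indicated in the statement. By contrast, the error estimates and the use of the distance-to-boundary hypothesis are essentially identical to Proposition \ref{prop1}, except that the $2\varepsilon^\gamma$ buffer is needed to keep Lemma \ref{lem1} applicable in both of the nested integrations.
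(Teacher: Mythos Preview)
Your proposal is correct and follows essentially the same route as the paper: exchange the order of integration, apply Lemma \ref{lem1} twice (first to the inner $y$-integral, then to the outer $r$-integral, with the $2\varepsilon^\gamma$ buffer ensuring both applications are valid in the interior), use the bi-stochastic condition with $f\equiv 1$ to determine $d^2$, and substitute the leading-order form $d \sim \varepsilon^{d/2} m_0 (pq/(vw))^{1/2}$. One minor correction to your bookkeeping: the \emph{inner} expansion (integrand $q/(dw)$) is what, after substituting $d$, yields the first quotient, and the \emph{outer} expansion (integrand $pq/(dvw)$) yields the $(qp/(vw))^{1/2}$ quotient, so your attribution of inner versus outer is reversed.
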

\begin{proof}
Since the kernel $c_\varepsilon$ is bi-stochastic with respect to $d\hat{\mu}$
we have the following integral equation
$$
1 = \int_X c_\varepsilon(x,y) d\hat{\mu}(y) = \int_X \frac{\int_X
k_\varepsilon(x,r) k_\varepsilon(y,r) \frac{p(r)}{v(r)} dr}{d(x) d(y)}
\frac{q(y)}{w(y)} dy.
$$
Exchanging the order of integration by Fubini's Theorem gives
$$
1 = \int_X \frac{k_\varepsilon(x,r) p(r)}{d(x) v(r)} \int_X
k_\varepsilon(y,r)   \frac{q(y)}{d(y) w(y)} dy dr.
$$
Since $k_\varepsilon(x,y) = h(|x-y|^2/\varepsilon)$ where $h$ has exponential
decay, the outside integral can be truncated to a ball of radius
$\varepsilon^\gamma$ around $x$ while incurring exponentially small error. In
particular, we have
$$
1 = \int_{\{r \in X : |r - x| < \varepsilon^\gamma\}} \frac{k_\varepsilon(x,r)
p(r)}{d(x) v(r)} \int_X k_\varepsilon(y,r)   \frac{q(y)}{d(y) w(y)} dy dr \left(1 +
\mathcal{O}\left(\varepsilon^2 \right) \right).
$$
Since we assumed that $d(x,\partial X) \le 2 \varepsilon^\gamma$ it follows
that $d(r,\partial X) \le \varepsilon^\gamma$ for all $r$ in the above integral.
\begin{figure}[h!]
\centering
\includegraphics[width=0.4\textwidth]{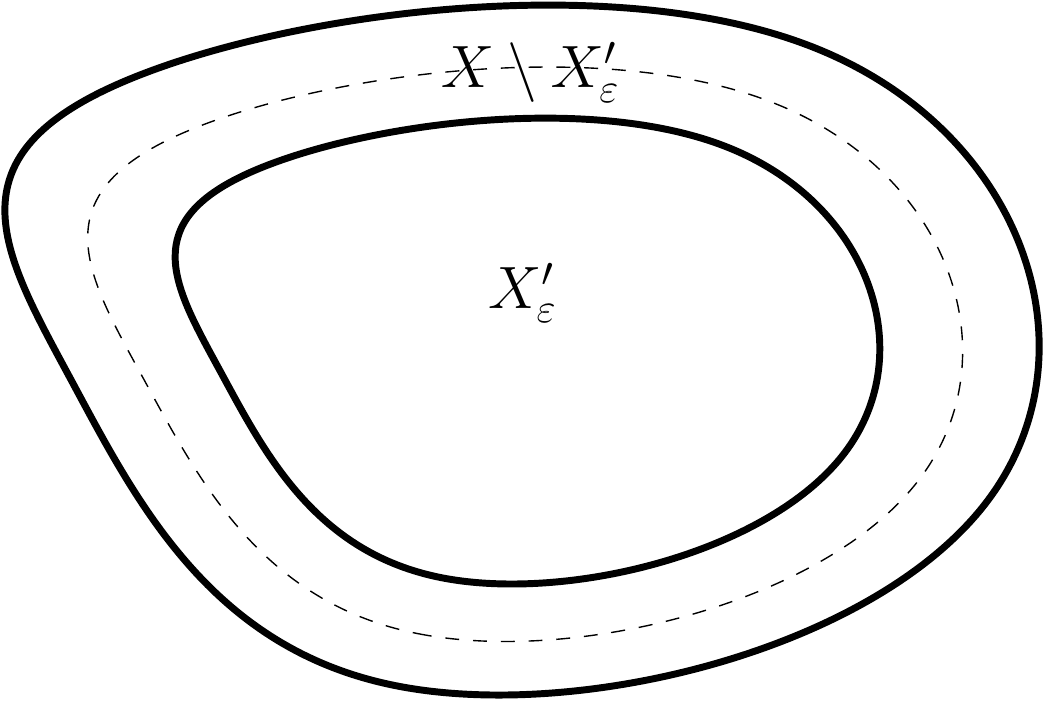}
\caption{We partition $X = X_\varepsilon^\prime \cup (X \setminus
X_\varepsilon^\prime)$ where $X_\varepsilon^\prime = \{x \in X : d(x,\partial X)
> 2 \varepsilon^\gamma\}$; the dotted line represents the fact that $\{ y \in X
: |x^\prime - y| < \varepsilon^\gamma \} \subset \{ y \in X : d(y,\partial X) >
\varepsilon^\gamma \}$ for each $x^\prime \in X_\varepsilon^\prime$.
}
\end{figure}
Therefore, we may apply Lemma \ref{lem1} to the inner integral to conclude that
$$
1 = \int_{\{r \in X : |r - x| < \varepsilon^\gamma\}} \frac{k_\varepsilon(x,r)
p(r)}{d(x) v(r)} \varepsilon^{d/2} \frac{m_0 q(r)}{d(r)w(r)} \left(1 +
\frac{\varepsilon m_2}{2 m_0} \left( E(x) - \frac{\Delta \left( \frac{q(r)}{d(r)
w(r)} \right)}{\frac{q(r)}{d(r)w(r)}} \right) + \mathcal{O} \left(\varepsilon^2
\right) \right) dr.
$$
Since the restriction of the integral to the ball $\{r \in X : |r-x| <
\varepsilon^\gamma \}$ incurs exponentially small error, we may apply Lemma
\ref{lem1} to the outside integral and rearrange terms to obtain
$$
d^2 = \varepsilon^d m_0^2 \frac{ q \cdot p}{v \cdot w} \left( 1 +
\frac{\varepsilon m_2}{2 m_0} \left(2 E(x) - \frac{\Delta \left( \frac{q}{d\cdot
w}\right)}{\left(\frac{q}{d\cdot w}\right)} - \frac{\Delta \left( \frac{q \cdot
p}{d \cdot v \cdot w}\right)}{\left( \frac{q \cdot p}{d \cdot v \cdot
w}\right)} \right) + \mathcal{O}\left(\varepsilon^2 \right)  \right).
$$
Next, we compute the expansion of $C_\varepsilon$ applied to an arbitrary
function $f \in C^3(X)$ 
$$
C_\varepsilon f (x) = 
\int_X \frac{\int_X k_\varepsilon(x,r) k_\varepsilon(y,r) \frac{p(r)}{v(r)}
dr}{d(x) d(y)} f(y) \frac{q(y)}{w(y)} dy.
$$
Truncating the integral over $r$ and applying Lemma \ref{lem1} twice as above
yields
$$
C_\varepsilon f = \varepsilon^d m_0^2 \frac{f \cdot p \cdot q}{d^2 \cdot v \cdot
w} \left( 1 + \frac{\varepsilon m_2}{2 m_0} \left(2 E(x) - \frac{\Delta \left(
\frac{f\cdot q}{d \cdot w} \right)}{ \left( \frac{f \cdot q}{d \cdot w}
\right)} - \frac{\Delta \left( \frac{f \cdot p \cdot q}{v \cdot w}
\right)}{\left( \frac{f\cdot p \cdot q}{v \cdot w} \right)} \right)
+ \mathcal{O}(\varepsilon^2) \right).  
$$
Substituting the expression for $d^2$ into this expression gives
$$
C_\varepsilon f = f \left( 1 + \frac{\varepsilon m_2}{2 m_0} \left( \frac{\Delta
\left( \frac{f\cdot q}{d \cdot w} \right)}{ \left( \frac{f \cdot q}{d
\cdot w} \right)} + \frac{\Delta \left( \frac{f \cdot p \cdot q}{v \cdot
w} \right)}{\left( \frac{f\cdot p \cdot q}{v \cdot w} \right)}
- \frac{\Delta \left( \frac{q}{d \cdot w}
\right)}{ \left( \frac{q}{d \cdot w} \right)} - \frac{\Delta \left(
\frac{p \cdot q}{v \cdot w} \right)}{\left( \frac{p \cdot q}{v \cdot w}
\right)} \right) \right)
+ \mathcal{O}(\varepsilon^2).
$$
Observe that
$$
d = \varepsilon^{d/2} m_0 \left( \left( \frac{p \cdot q}{v \cdot w}
\right)^{1/2} + \mathcal{O}\left(\varepsilon \right) \right).
$$
Substituting this expression into the above
equation gives 
$$
C_\varepsilon f =  f - \frac{\varepsilon m_2}{2 m_0} \left( 
\frac{ \Delta \left( f  \left( \frac{q \cdot v}{w \cdot p} \right)^{1/2} \right)}{\left( \frac{q \cdot
v}{w \cdot p} \right)^{1/2}}
-
\frac{ \Delta \left( \frac{q \cdot v}{w \cdot p} \right)^{1/2} }{\left( \frac{q
\cdot v}{w \cdot p} \right)^{1/2}} f
+
\frac{ \Delta \left( f  \left( \frac{q \cdot p}{w \cdot v} \right)^{1/2}
\right)}{\left( \frac{q \cdot p}{w \cdot v} \right)^{1/2}}
-
\frac{ \Delta \left( \frac{q \cdot p}{w \cdot v} \right)^{1/2}
}{\left( \frac{q \cdot p}{w \cdot v} \right)^{1/2}}  f
\right) + \mathcal{O} \left( \varepsilon^2 \right),
$$
as was to be shown.
\end{proof}

\begin{proposition} \label{prop4}
Suppose that $0 < \gamma < 1/2$ and $k > 0$ is fixed. If $f \in E_k$, then
uniformly over $x \in X$ within distance $2 \varepsilon^{\gamma}$ of $\partial
X$
$$
C_{\varepsilon} f(x) = f(x_0) + \mathcal{O}(\varepsilon),
$$
where $x_0$ is the closest point in $\partial X$ to $x \in X$ (with respect to
Euclidean distance).
\end{proposition}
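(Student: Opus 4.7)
The plan is to mimic the strategy of Proposition \ref{prop2} while using the iterated-integral structure already unpacked in the proof of Proposition \ref{prop3}. Namely, I would write the bi-stochastic identity
$$
1 = \int_X c_\varepsilon(x,y) \, d\hat{\mu}(y) = \int_X \frac{k_\varepsilon(x,r) p(r)}{d(x) v(r)} \int_X k_\varepsilon(y,r) \frac{q(y)}{d(y) w(y)} dy \, dr,
$$
after applying Fubini. By the exponential decay of $h$, I truncate the outer $r$-integral to the ball $\{r \in X : |r-x| < \varepsilon^\gamma\}$ at the cost of an exponentially small error. Since $x$ is within $2\varepsilon^\gamma$ of $\partial X$, every such $r$ lies within $3\varepsilon^\gamma$ of $\partial X$, so Lemma \ref{lem2} applies to the inner $y$-integral and yields an expansion of the form $\varepsilon^{d/2}(m_{0,\varepsilon}(r) g(r_0) + \varepsilon^{1/2} m_{1,\varepsilon}(r) \partial_n g(r_0) + \mathcal{O}(\varepsilon))$ with $g = q/(d\cdot w)$ and $r_0$ the closest boundary point to $r$. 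A second application of Lemma \ref{lem2} to the outer $r$-integral, exploiting that $x$ is also near the boundary, produces an expansion for $d(x) d(x_0)$ with leading term proportional to $(q\cdot p)/(v \cdot w)$ at $x_0$ and correction terms of order $\varepsilon^{1/2}$ involving normal derivatives at $x_0$ and the boundary moment functions.

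Next I would run the same double expansion for $C_\varepsilon f(x)$ itself, inserting the extra factor $f(y)$ in the inner integral: the result is the same expression as before, but with $q$ replaced by $f \cdot q$ and an additional outer factor $f(x_0)$ to leading order. Dividing these two expansions (using a Taylor expansion of the inverse as in the proof of Proposition \ref{prop2}) cancels the leading-order $\varepsilon^{d/2}$ factor and produces
$$
C_\varepsilon f(x) = f(x_0) \cdot \frac{1 + \varepsilon^{1/2} \, \text{(normal derivatives involving } f \cdot q \text{ and } f\cdot p\cdot q\text{)}}{1 + \varepsilon^{1/2} \, \text{(normal derivatives involving } q \text{ and } p\cdot q\text{)}} + \mathcal{O}(\varepsilon).
$$
At this point I invoke the hypothesis $f \in E_k$: $f$ satisfies the Neumann condition at $x_0$, so $\partial_n f(x_0) = 0$ and $f(x_0)$ can be factored out of every normal derivative in the numerator. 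This makes the ratio of the bracketed terms equal to $1$, leaving only the advertised $f(x_0) + \mathcal{O}(\varepsilon)$.

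The main obstacle is the careful bookkeeping of the two nested boundary expansions: one must verify that the "closest boundary point" selected by each application of Lemma \ref{lem2} is consistently $x_0$ up to the order of error being tracked (since $|r-x| < \varepsilon^\gamma$ and $|x-x_0| < 2\varepsilon^\gamma$, the closest boundary point to $r$ differs from $x_0$ by $\mathcal{O}(\varepsilon^\gamma)$, which only affects subleading terms through smoothness of $q,p,v,w,d$). The other delicate point is checking that the $\varepsilon^{1/2}$ boundary contributions in the numerator and denominator match structurally so that the Neumann condition cancels them simultaneously, exactly as in Proposition \ref{prop2}; once that cancellation is verified, the remaining terms are genuinely $\mathcal{O}(\varepsilon)$ uniformly in $x$, which is what is required.
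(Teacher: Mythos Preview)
Your proposal is correct and follows essentially the same approach as the paper's own proof: both start from the truncated Fubini identity of Proposition \ref{prop3}, apply Lemma \ref{lem2} first to the inner $y$-integral and then to the outer $r$-integral to obtain an expansion for $d \cdot d_{x_0}$, repeat with the extra factor $f$ to expand $C_\varepsilon f$, and then cancel the $\varepsilon^{1/2}$ terms using the Neumann condition on $f$. If anything, your write-up is more explicit than the paper's about the bookkeeping of the nested boundary points $r_0$ versus $x_0$, which the paper absorbs silently into the error term.
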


\begin{proof}
We begin with the equation
$$
1 = \frac{1}{d(x)} \int_{\{r \in X : |r - x| < \varepsilon^\gamma\}}
k_\varepsilon(x,r) \frac{p(r)}{v(r)} \int_X k_\varepsilon(y,r)
\frac{q(y)}{d(y) w(y)} dy dr.
$$
which was derived in the proof of Proposition \ref{prop3}. Since $d(x,\partial
X) \le \varepsilon^\gamma$, it follows that all $r$ in the above equation
satisfy $d(r,\partial X) < 3 \varepsilon^\gamma$; therefore, the result of
Lemma \ref{lem2} can be applied to the integral over $y$, where the factor $3$
is absorbed into the error term. Thus, we have
$$
\int_X k_\varepsilon(y,r)   \frac{q(y)}{d(y) w(y)} dy =\varepsilon^{d/2}
\frac{q(r_0)m_{0,\varepsilon}(r)}{d(r_0) w(r_0)}  
\left(1 +
\varepsilon^{1/2} \frac{m_{1, \varepsilon}(r)}{m_{0,\varepsilon}(r)} \frac{\partial_n
\frac{q(r_0)}{d(r_0) w(r_0)} }{\frac{q(r_0)}{d(r_0) w(r_0)}}  + \mathcal{O}(\varepsilon)
\right),
$$
where $r_0$ is the closest point on $\partial X$ to $r$. Substituting this
expression into our initial integral equation, expanding the integral by Lemma
\ref{lem2}, and rearranging terms gives
$$
d \cdot d_{x_0} = \varepsilon^d \frac{q_{x_0} \cdot {m_{0,\varepsilon}}_{x_0}
\cdot
m_{0,\varepsilon}
\cdot p_{x_0}}{w_{x_0} \cdot v_{x_0}} 
\cdot \left( 1 + \varepsilon^{1/2}
\frac{{m_{1,\varepsilon}}_{x_0}}{{m_{0,\varepsilon}}_{x_0}} \frac{\partial_n
\frac{q_{x_0}}{d_{x_0} \cdot w_{x_0}}}{\frac{q_{x_0}}{d_{x_0} \cdot w_{x_0}}} 
+ \varepsilon^{1/2} \frac{m_{1,\varepsilon}}{m_{0,\varepsilon}}
\frac{\partial_n \frac{{m_{0,\varepsilon}}_{x_0} \cdot
p_{x_0} \cdot q_{x_0}}{d_{x_0} \cdot w_{x_0} \cdot
v_{x_0}}}{\frac{{m_{0,\varepsilon}}_{x_0} \cdot p_{x_0} \cdot
q_{x_0}}{d_{x_0} \cdot w_{x_0} \cdot v_{x_0}}} +
\mathcal{O}\left(\varepsilon \right) \right),
$$
where the argument $x$ of the functions $d(x)$ and $m_{0,\varepsilon}(x)$ have
been suppressed, and where we write $f_{x_0} = f(x_0)$ for notational brevity.
Moreover, as in Proposition \ref{prop2}, if a similar expansion for
$C_\varepsilon f$ is performed, substituting in this derived value for $d \cdot
d_{x_0}$ yields 
$$ 
C_\varepsilon f = f_{x_0} + \mathcal{O} \left( \varepsilon \right),
$$ 
since the terms of order $\varepsilon^{1/2}$ will cancel as in the proof
of Proposition \ref{prop2} since $f$ can be taken out of normal derivative to
the boundary since it verifies the Neumann condition.

\end{proof}
\subsection{Proof of Theorem \ref{thm2}}
\begin{proof}[Proof of Theorem \ref{thm2}]
The proof of Theorem \ref{thm2} follows from an identical argument as the
proof of Theorem \ref{thm1} where the results of Propositions \ref{prop1} and
\ref{prop2} are replaced with the results of Propositions \ref{prop3} and
\ref{prop4}.
\end{proof}

\section{Proof of Propositions \ref{grad1} and \ref{grad2}}
\subsection{Proof of Proposition \ref{grad1}}
\begin{proof}[Proof of Proposition \ref{grad1}]
First using a Nytr\"om type extension we write
$$
\nabla_x \varphi_k(x) = \frac{1}{\lambda_k} \int_X \nabla_x b_\varepsilon(x,y) \varphi_k(y)
d\hat{\mu}(y).
$$
The gradient (in the $x$-variable) of the kernel $b_\varepsilon(x,y)$ can be computed
by the product rule
$$
\nabla_x b_\varepsilon(x,y) = \nabla_x \frac{k_\varepsilon(x,y)}{d(x)d(y)} = \frac{\nabla_x
k_\varepsilon(x,y)}{d(x) d(y)} - \frac{k_\varepsilon(x,y)}{d(x) d(y)}
\frac{\nabla_x d(x)}{d(x)}.
$$
The gradient $\nabla_x k_\varepsilon(x,y)$ can be computed directly
$$
\nabla_x k_\varepsilon(x,y) = -\frac{x-y}{\varepsilon} e^{-|x-y|^2/\varepsilon}
= \frac{y-x}{\varepsilon} k_\varepsilon(x,y).
$$
In order to compute the logarithmic derivative $\nabla_x d(x)/d(x)$ we use the
fact that
$$
d(x) = \int_X \frac{k_\varepsilon(x,y)}{d(y)} d\hat{\mu}(y),
$$
which is an immediate consequence of the bi-stochasticity of $b_\varepsilon$
with respect to $d\hat{\mu}$.  Specifically,
$$
\frac{\nabla_x d(x)}{d(x)} = \frac{1}{d(x)} \int_X \frac{\nabla_x
k_\varepsilon(x,y)}{d(y)} d\hat{\mu}(y) = \frac{1}{d(x)} \int_X \frac{y -
x}{\varepsilon} \frac{k_\varepsilon(x,y)}{d(y)} d\hat{\mu}(y) = \frac{\bar{x} -
x}{\varepsilon},
$$
where
$$
\bar{x} := \int_X y b_\varepsilon(x,y) d\hat{\mu}(y).
$$
Substituting the expressions for $\nabla_x k_\varepsilon(x,y)$ and $\nabla_x
d(x)/d(x)$ into the Nystr\"om extension yields
$$
\nabla_x \varphi_k(x) = \frac{1}{\lambda_k} \int_X \frac{y -
\bar{x}}{\varepsilon} b_\varepsilon(x,y) \varphi_k(y) d\hat{\mu}(y)
$$
which completes the proof.
\end{proof}

\begin{remark}
If $f \in L^2(X,d\hat{\mu})$, then 
$$
\nabla_x f(x) = \sum_{k} \langle f, \varphi_k \rangle_{L^2(X,d\hat{\mu})}
\frac{1}{\lambda_k} \int_X \frac{y - \bar{x}}{\varepsilon} b_\varepsilon(x,y) \varphi_k(y)
d\hat{\mu}(y).
$$
\end{remark}
\noindent However, from a numerical point of view this formula only really
makes sense when the sum is restricted to the first several eigenfunctions, and
$f$ is well approximated by linear combinations of these functions; otherwise,
any error in the computation of $\langle f,\varphi_k
\rangle_{L^2(X,d\hat{\mu})}$ will be amplified by a factor of $1/\lambda_k$.

\subsection{Proof of Proposition \ref{grad2}}
\begin{proof}[Proof of Proposition \ref{grad2}]
The proof follows the same pattern as the proof of Proposition \ref{grad1}.
Using a Nyst\"om like extension we write
$$
\nabla_x \varphi_k(x) = \frac{1}{\lambda_k} \int_X \nabla_x
\frac{c_\varepsilon(x,y)}{d(x) d(y)} \varphi_k(y) d\hat{\mu}(y).
$$
Now by the product rule for differentiation
$$
\nabla_x \frac{c_\varepsilon(x,y)}{d(x) d(y)}
= \frac{\nabla_x c_\varepsilon(x,y)}{d(x) d(y)} - \frac{c_\varepsilon(x,y)}{d(x)
d(y)} \frac{\nabla_x d(x)}{d(x)}.
$$
First, we compute
$$
\nabla_x c_\varepsilon(x,y) = \int_X \nabla_x k_\varepsilon(x,r)
k_\varepsilon(y,r) d \hat{\nu}(r)
= \int_X \frac{r - x}{\varepsilon} k_\varepsilon(x,r) k_\varepsilon(y,r)
d\hat{\nu}(r)
$$
Second, we compute
$$
\frac{\nabla_x d(x)}{d(x)} = \frac{1}{d(x)} \nabla_x  \int_X
\frac{c_\varepsilon(x,y)}{d(y)} d\hat{\mu}(y) 
=
\int_X \frac{\int_X \frac{r - x}{\varepsilon} k_\varepsilon(x,r)
k_\varepsilon(y,r) d\hat{\nu}(r)}{d(x) d(y)} d\hat{\mu}(y).
$$
We assert that
$$
\nabla_x \frac{c_\varepsilon(x,y)}{d(x) d(y)} = \frac{\int_X
\frac{r}{\varepsilon} k_\varepsilon(x,r) k_\varepsilon(y,r)
d\hat{\nu}(r)}{d(x) d(y)} - \frac{c_\varepsilon(x,y)}{d(x) d(y)} \int_X
\frac{\int_X \frac{r}{\varepsilon} k_\varepsilon(x,r) k_\varepsilon(y,r)
d\hat{\nu}(r)}{d(x) d(y)} d\hat{\mu}(y);
$$
indeed, the terms associated with $x/\varepsilon$ are equal except for
having opposite signs so they cancel.  Define 
$$
\quad \bar{r}_x := \int_X r (F_\varepsilon 1) (x,r)d\hat{\nu}(r) \quad
\text{and} \quad (F_\varepsilon f) (x,r) := \int_X \frac{k_\varepsilon(x,r)
k_\varepsilon(y,r)}{d(x) d(y)} f(y) d\hat{\mu}(y).
$$
With this notation
$$
\nabla_x \frac{c_\varepsilon(x,y)}{d(x) d(y)} = \frac{\int_X \frac{r -
\bar{r}_x}{\varepsilon} k_\varepsilon(x,r) k_\varepsilon(y,r)
d\hat{\nu}(r)}{d(x) d(y)}.
$$
Therefore,
$$
\nabla_x \varphi_k(x) = \frac{1}{\lambda_k} \int_X \frac{r -
\bar{r}_x}{\varepsilon} (F_\varepsilon \varphi_k)(x,r) d\hat{\nu}(r),
$$
as was to be shown.

\end{proof}

\end{document}